\newcommand{\ourtitle}{A Second-order Bound with Excess Losses}
\title{\ourtitle}
\newcommand{\R}{\mathbb{R}}
\newcommand{\indic}[1]{\mathds{1}_{#1}}
\renewcommand{\leq}{\leqslant}
\renewcommand{\geq}{\geqslant}
\renewcommand{\epsilon}{\varepsilon}
\renewcommand{\hat}{\widehat}
\newcommand{\argmin}{\mathop{\mathrm{argmin}}}
\newcommand{\cF}{\mathcal{F}}
\newcommand{\loss}{\ell}
\newcommand{\algloss}{\hat{\ell}}
\newcommand{\bq}{\boldsymbol{q}}
\renewcommand{\b}{\boldsymbol}
\newcommand{\bp}{\boldsymbol{p}}
\newcommand{\cregret}{R^\textrm{c}} 
\newcommand{\Var}{\mathrm{Var}}
\renewcommand{\P}{\mathds{P}}
\newcommand{\E}{\mathds{E}}
\newcommand{\e}{\mathrm{e}}
\renewcommand{\d}{\,\mathrm{d}}
\newcommand{\norm}[1]{\left\|#1\right\|}
\newcommand{\diag}{\mathrm{diag}}
\renewcommand{\indent}{\hspace*{0.5cm}}
\renewcommand{\tilde}{\widetilde}
\newcommand{\lt}{\tilde{\ell}}
\newcommand{\pt}{\tilde{p}}
\newcommand{\bpt}{\tilde{\bp}}
\newcommand{\cA}{\mathcal{A}}
\newcommand{\half}{\frac{1}{2}}
\newtheorem{ass}{Assumption}
\begin{document}
\maketitle

\begin{abstract}
We study online aggregation of the predictions of experts, and first
show new second-order regret bounds in the standard setting, which
are obtained via a version of the Prod algorithm (and also a version of
the polynomially weighted average algorithm) with multiple learning
rates. These bounds are in terms of excess losses, the differences
between the instantaneous losses suffered by the algorithm and the ones
of a given expert. We then demonstrate the interest of these bounds in
the context of experts that report their confidences as a number in the
interval $[0,1]$ using a generic reduction to the standard setting. We
conclude by two other applications in the standard setting, which
improve the known bounds in case of small excess losses and show a
bounded regret against i.i.d.\ sequences of losses.
\end{abstract}

\section{Introduction}
\label{sec:introduction}

In the (simplest) setting of prediction with expert advice,
a learner has to make online sequential predictions over a series of
rounds, with the help of $K$ experts
\citep{FreundSchapire1997,LittlestoneWarmuth1994,Vovk1998,CesaBianchiLugosi2006}.
In each round $t = 1,\ldots, T$,
the learner makes a prediction by choosing a vector $\bp_t =
(p_{1,t},\ldots,p_{K,t})$ of nonnegative weights that sum to one. Then
every expert $k$ incurs a loss $\ell_{k,t} \in [a,b]$ and the learner's
loss is $\algloss_t = \bp_t^\top \b \ell_t = \sum_{k=1}^K p_{k,t}
\ell_{k,t}$, where $\b \ell_t = (\ell_{1,t},\ldots,\ell_{K,t})$.
The goal of the learner is to control his cumulative loss, which
he can do by controlling his regret $R_{k,T}$ against each expert $k$, where
$R_{k,T} = \sum_{t \leq T} \big(\algloss_t - \loss_{k,t}\big)$.
In the worst case, the best bound on the standard regret $R_{k,T}$ that
can be guaranteed is of order $O\big(\sqrt{T \ln K}\big)$; see, e.g.,
\citet{CesaBianchiLugosi2006}, but this can be improved. For
example, when losses take values in $[0,1]$,
$R_{k,T} = O\big(\sqrt{L_{k,T} \ln K}\big)$, with $L_{k,T} =
\sum_{t=1}^T \loss_{k,t}$, is also possible, which is better when the
losses are small---hence the name \emph{improvement for small losses} for
this type of bounds \citep{CesaBianchiLugosi2006}.

\paragraph{Second-order bounds}
\citet{CesaBianchiMansourStoltz2007} raised
the question of whether it was possible to improve even further by
proving second-order (variance-like) bounds on the regret. They
could establish two types of bound, each with its own advantages.
The first is of the form
\begin{equation}\label{eq:TunedProd2007}
R_{k,t} \leq
\frac{\ln K}{\eta} + \eta \sum_{t=1}^T \ell_{k,t}^2
\end{equation}
for all experts $k$, where $\eta \leq 1/2$ is a parameter of the
algorithm. If one could optimize $\eta$ with hindsight knowledge of the
losses, this would lead to the desired
bound
\begin{equation}\label{eq:impossibletuning}
\cancel{R_{k,T} = O\left({\sqrt{\ln K \,\, \textstyle{\sum_{t=1}^T
\ell_{k,t}^2}}}\right)},
\end{equation}
but, unfortunately, no method is known that actually achieves
\eqref{eq:impossibletuning} for all experts $k$ simultaneously without
such hindsight knowledge. As explained by
\citet{CesaBianchiMansourStoltz2007} and \citet{HazanKale2010}, the
technical difficulty is that the optimal $\eta$ would depend on $\sum_t
\ell_{k^\star_T,t}^2$, where
\[
k^\star_T \in \mathop{\mathrm{argmin}}_{k = 1,\ldots,K} \left\{
\sum_{t=1}^T \ell_{k,t} + \sqrt{\ln K \,\, \sum_{t=1}^T \ell_{k,t}^2}
\right\}.
\]
But, because $k^\star_T$ can vary with $T$, the sequence of the $\sum \ell_{k^\star_t,t}^2$
is not monotonic and, as a consequence, standard tuning
methods (like for example the doubling trick) cannot be applied.

This is why this issue --- when
hindsight bounds seem too good to be obtained in a sequential fashion
--- is sometimes referred to as the problem of \emph{impossible tunings}. Improved bounds with
respect to~\eqref{eq:TunedProd2007} have been obtained by
\citet{HazanKale2010} and \citet{ChiangEtAl2012} but they suffer from
the same impossible tuning issue. \medskip

The second type of bound distinguished by
\citet{CesaBianchiMansourStoltz2007} is of the form
\begin{equation}\label{eq:ExponentialWeightsVariances}
  R_{k,T} = O\left( { \sqrt{\ln K \,\, \textstyle{\sum_{t=1}^T}
  v_{t}}}\right),
\end{equation}
uniformly over all experts $k$, where $v_{t} = \sum_{k \leq K} p_{k,t} \big(\hat
\ell_t- \ell_{k,t}\big)^2$ is the variance of the losses at instance $t$
under distribution $\bp_t$. It can be achieved by a variant of the
exponentially weighted average forecaster using the appropriate tuning
of a time-varying learning rate $\eta_t$
\citep{CesaBianchiMansourStoltz2007,DeRooijEtAl2013}. The
bound~\eqref{eq:ExponentialWeightsVariances} was shown in the mentioned
references to have several interesting consequences (see Section~\ref{sec:OtherApplications}).
Its main drawback comes from its uniformity: it does not reflect that it is harder to compete
with some experts than with other ones.

\paragraph{Excess losses}

Instead of uniform regret bounds like
\eqref{eq:ExponentialWeightsVariances}, we aim to get expert-dependent
regret bounds. The key quantities in our analysis turn out to be the
instantaneous \emph{excess losses} $\loss_{k,t} - \algloss_t$, and we
provide in Sections~\ref{sec:MLProd} and~\ref{sec:TuningEta} a new
second-order bound of the form
\begin{equation}\label{eq:NewBound}
R_{k,T} = O\left(\sqrt{\ln K \,\, \textstyle{\sum_{t=1}^T} (\algloss_t- \loss_{k,t})^2}\right),
\end{equation}
which holds for all experts $k$ simultaneously. To achieve this bound, we develop a
variant of the Prod algorithm of \citet{CesaBianchiMansourStoltz2007}
with two innovations: first we extend the analysis for Prod to multiple
learning rates $\eta_k$ (one for each expert) in the spirit of a variant
of the Hedge algorithm with multiple learning rates proposed by
\citet{BlumMansour2007}. Standard tuning techniques of the learning rates would then
still lead to an additional $O(\sqrt{K \ln T})$ multiplicative factor,
so, secondly, we develop new techniques that bring this factor down to
$O(\ln \ln T)$, which we consider to be essentially a constant.

The interest of the bound~\eqref{eq:NewBound} is demonstrated in Sections~\ref{sec:appliconf}
and~\ref{sec:OtherApplications}.
Section~\ref{sec:appliconf} considers the setting of prediction with experts
that report their confidences as a number in the interval $[0,1]$, which was first studied by
\citet{BlumMansour2007}. Our general bound~\eqref{eq:NewBound}
leads to the first bound on the confidence regret that scales optimally
with the confidences of each expert.
Section~\ref{sec:OtherApplications} returns to the standard setting described at the beginning
of this paper: we show an improvement for small excess losses, which
supersedes the basic improvement
for small losses described at the beginning of the introduction. Also, we prove that in the special case of
independent, identically distributed losses, our bound leads to a
constant regret.

\section{A new regret bound in the standard setting}
\label{sec:MLProd}

We extend the Prod algorithm of \citet{CesaBianchiMansourStoltz2007}
to work with multiple learning rates.

\begin{algorithm}
\caption{Prod with multiple learning rates (ML-Prod)}
\label{algo:prod}
	\smallskip
    	\emph{Parameters}: a vector $\b\eta = (\eta_1,\ldots,\eta_K)$ of learning rates \\
	\emph{Initialization}: a vector $\b w_0 =
        (w_{1,0},\ldots,w_{K,0})$ of nonnegative weights that sum to $1$

        \smallskip
	\emph{For} each round $t = 1,\,2,\,\ldots$ \\
	\indent 1. form the mixture $\bp_t$ defined component-wise by
	$\displaystyle{p_{k,t} = {\eta_{k} w_{k,t-1}} \big/ {\b\eta^\top} {\b w_{t-1}}}$ \\
	\indent 2. observe the loss vector $\b \ell_t$ and incur loss $\algloss_t = \b p_t^\top \b \ell_t$\\
	\indent 3. for each expert $k$ perform the update
		$\displaystyle{w_{k,t} = w_{k,t-1} \bigl( 1+\eta_k \bigl(\algloss_t - \ell_{k,t} \bigr) \bigr)}$ \\
\end{algorithm}

\begin{theorem}
\label{th:lmprod-plain}
For all sequences of loss vectors $\b \ell_t \in [0,1]^K$, the cumulative loss of Algorithm~\ref{algo:prod}
run with learning rates $\eta_k \leq 1/2$ is bounded by
\[
\sum_{t=1}^T \hat{\ell}_{t} \leq \min_{1 \leq k \leq K} \left\{ \sum_{t=1}^T \ell_{k,t} + \frac{1}{\eta_k} \ln \frac{1}{w_{k,0}}
+ \eta_k \sum_{t=1}^T \bigl(\algloss_t - \ell_{k,t} \bigr)^2 \right\}.
\]
\end{theorem}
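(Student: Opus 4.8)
The plan is to run the classical potential-function analysis of Prod, adapted to the coupling introduced by the multiple learning rates. First I would abbreviate the instantaneous excess losses by $r_{k,t} = \algloss_t - \loss_{k,t}$, so that the update reads $w_{k,t} = w_{k,t-1}\bigl(1+\eta_k r_{k,t}\bigr)$ and, since $\sum_{t} r_{k,t} = \sum_t \algloss_t - \sum_t \loss_{k,t}$, the claimed bound amounts to showing, for each fixed $k$, that $\sum_t r_{k,t} \leq \frac{1}{\eta_k}\ln\frac{1}{w_{k,0}} + \eta_k \sum_t r_{k,t}^2$. The structural identity I would record at once is $\sum_k \eta_k w_{k,t-1}\, r_{k,t} = 0$: by the definition of $p_{k,t}$ and of $\algloss_t$ one has $\sum_k \eta_k w_{k,t-1}\loss_{k,t} = (\b\eta^\top \b w_{t-1})\,\algloss_t$, and subtracting this from $\algloss_t\sum_k \eta_k w_{k,t-1}$ gives zero.

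Next I would treat each expert separately on the logarithmic scale. Unrolling the update gives $w_{k,T} = w_{k,0}\prod_{t=1}^T (1+\eta_k r_{k,t})$, so taking logarithms and applying the elementary inequality $\ln(1+x) \geq x - x^2$, valid for all $x \geq -\half$, yields $\ln w_{k,T} \geq \ln w_{k,0} + \eta_k \sum_t r_{k,t} - \eta_k^2 \sum_t r_{k,t}^2$. The inequality applies termwise because $\loss_{k,t},\algloss_t \in [0,1]$ forces $r_{k,t} \in [-1,1]$, and with $\eta_k \leq \half$ the argument $\eta_k r_{k,t}$ stays in $[-\half,\half]$. Rearranging this gives $\sum_t r_{k,t} \leq \frac{1}{\eta_k}\ln\frac{w_{k,T}}{w_{k,0}} + \eta_k \sum_t r_{k,t}^2$, which is the target bound except for the stray term $\frac{1}{\eta_k}\ln w_{k,T}$.

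Discarding that term, i.e. establishing $w_{k,T} \leq 1$ for every $k$, is the step I expect to be the crux, and it is precisely where the coupling between the experts must be exploited. Summing the update over $k$ and invoking the structural identity above, $\sum_k w_{k,t} = \sum_k w_{k,t-1} + \sum_k \eta_k w_{k,t-1} r_{k,t} = \sum_k w_{k,t-1}$, so the total weight is conserved and equals $\sum_k w_{k,0}=1$ at every round. Moreover each factor $1+\eta_k r_{k,t}$ lies in $[\half,\tfrac{3}{2}]$ and is hence positive, so all weights stay nonnegative; combined with the conservation this gives $w_{k,T} \leq \sum_j w_{j,T} = 1$, whence $\ln w_{k,T} \leq 0$. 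Dropping the corresponding nonpositive term therefore only weakens the inequality, leaving $\sum_t r_{k,t} \leq \frac{1}{\eta_k}\ln\frac{1}{w_{k,0}} + \eta_k \sum_t r_{k,t}^2$ for each $k$, and taking the minimum over $k$ completes the argument.
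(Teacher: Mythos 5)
Your proposal is correct and follows essentially the same route as the paper: the termwise application of $\ln(1+x)\geq x-x^2$ for $x\geq -\half$ (justified by $\eta_k r_{k,t}\in[-\half,\half]$), and the conservation of the total weight $\sum_k w_{k,t}$ via the cancellation identity $\sum_k \eta_k w_{k,t-1} r_{k,t}=0$ coming from the definition of $\bp_t$, which gives $w_{k,T}\leq W_T = W_0 = 1$. Your explicit remark that each factor $1+\eta_k r_{k,t}$ is positive, so that all weights remain nonnegative and $w_{k,T}\leq W_T$ is legitimate, is a small point the paper leaves implicit; otherwise the two arguments coincide.
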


If we could optimize the bound of the theorem
with respect to $\eta_k$, we would obtain the desired result:
\begin{equation}
\label{eq:optiSOB}
\sum_{t=1}^T \algloss_t \leq \min_{1 \leq k \leq K} \left\{ \sum_{t=1}^T \ell_{k,t} +
2 \sqrt{\sum_{t=1}^T V_{k,t} \ln \frac{1}{w_{k,0}}} \right\}
\end{equation}
where  $V_{k,t} = \bigl(\algloss_t - \ell_{k,t} \bigr)^2$.
The question is therefore how to get the optimized bound~\eqref{eq:optiSOB}
in a fully sequential way. Working in regimes (resorting to some doubling trick) seems suboptimal,
since $K$ quantities $\sum_t V_{k,t}$ need to be controlled simultaneously and new regimes will start
as soon as one of these quantities is larger than some dyadic threshold. This would lead to an
additional $O(\sqrt{K \ln T})$ multiplicative factor in the bound. We propose in Section~\ref{sec:TuningEta}
a finer scheme, based on time-varying learning rates $\eta_{k,t}$, which only costs a
multiplicative $O(\ln \ln T)$ factor in the regret bounds. Though the analysis of a single time-varying parameter
is rather standard since the paper by \citet{AuerEtAl2002}, the
analysis of multiple such parameters is challenging and does not follow from a routine
calculation. That the ``impossible tuning'' issue does not arise here was quite surprising to us.

\paragraph{Empirical variance of the excess losses} A consequence of~\eqref{eq:optiSOB} is the following bound,
which is in terms of the empirical variance of the excess losses $\ell_{k,t} - \algloss_t$:
\begin{equation}
\label{eq:varexc}
\sum_{t=1}^T \algloss_t \leq \min_{1 \leq k \leq K} \left\{ \sum_{t=1}^T \ell_{k,t}
+ 4 \ln \frac{1}{w_{k,0}}
+ 2 \sqrt{\sum_{t=1}^T \left(\algloss_t - \ell_{k,t} - \frac{R_{k,T}}{T} \right)^2 \ln \frac{1}{w_{k,0}}} \right\}.
\end{equation}

\begin{proposition}
Suppose losses take values in $[0,1]$. If \eqref{eq:optiSOB} holds, then
\eqref{eq:varexc} holds.
\end{proposition}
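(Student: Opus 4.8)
The plan is to derive \eqref{eq:varexc} term by term. Since \eqref{eq:optiSOB} is a minimum over $k$, it implies that for \emph{each} fixed expert $k$ we have $R_{k,T} \leq 2\sqrt{\bigl(\sum_{t} V_{k,t}\bigr)\,c_k}$, where I abbreviate $c_k := \ln(1/w_{k,0}) \geq 0$ (nonnegative because $w_{k,0}\le 1$) and $R_{k,T} = \sum_{t=1}^T(\algloss_t - \ell_{k,t})$. It then suffices to prove, for each $k$ separately, that $R_{k,T} \leq 4c_k + 2\sqrt{S_k\, c_k}$ with $S_k := \sum_{t=1}^T (\algloss_t - \ell_{k,t} - R_{k,T}/T)^2$; taking the minimum over $k$ recovers \eqref{eq:varexc}. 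If $R_{k,T} \leq 0$ the target holds trivially because its right-hand side is nonnegative, so I assume $R_{k,T} > 0$ throughout.

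First I would rewrite $\sum_t V_{k,t}$ using the bias--variance (parallel-axis) identity. Writing $X_t = \algloss_t - \ell_{k,t}$, whose empirical mean is exactly $\bar X = R_{k,T}/T$, one has $\sum_{t=1}^T X_t^2 = S_k + R_{k,T}^2/T$. Substituting this into the per-expert consequence of \eqref{eq:optiSOB} gives $R_{k,T} \leq 2\sqrt{\bigl(S_k + R_{k,T}^2/T\bigr)\,c_k}$, which is the inequality I must now solve for $R_{k,T}$.

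The main obstacle is the circular appearance of $R_{k,T}$: it sits both on the left and inside the square root on the right, so the naive split $\sqrt{a+b}\leq\sqrt a+\sqrt b$ only yields $R_{k,T}\leq 2\sqrt{S_k c_k} + 2R_{k,T}\sqrt{c_k/T}$, which still contains $R_{k,T}$ on both sides. I would instead square the inequality, obtaining $R_{k,T}^2 \leq 4c_k S_k + 4c_k R_{k,T}^2/T$, and then exploit the hypothesis that losses lie in $[0,1]$: since every excess loss satisfies $X_t = \algloss_t - \ell_{k,t} \leq 1$, the mean obeys $R_{k,T}/T = \bar X \leq 1$, hence $R_{k,T}^2/T \leq R_{k,T}$. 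This linearizes the troublesome term and produces the quadratic inequality $R_{k,T}^2 - 4c_k R_{k,T} - 4c_k S_k \leq 0$.

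Finally I would solve this quadratic in $R_{k,T}$: bounding by the positive root gives $R_{k,T} \leq 2c_k + 2\sqrt{c_k^2 + c_k S_k}$, and subadditivity of the square root, $\sqrt{c_k^2 + c_k S_k} \leq c_k + \sqrt{c_k S_k}$, delivers exactly $R_{k,T} \leq 4c_k + 2\sqrt{S_k c_k}$, as required. The only place the $[0,1]$ assumption enters is the step $R_{k,T}^2/T \leq R_{k,T}$; everything else is purely algebraic, and I expect the constants $4$ and $2$ to come out precisely as stated rather than needing to be loosened.
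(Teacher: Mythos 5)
Your proof is correct, and it shares the paper's overall skeleton: reduce to experts with $R_{k,T}\geq 0$ (the bound being trivial otherwise), apply the bias--variance identity \eqref{eq:biasvariance}, and finish with $\sqrt{x+y}\leq\sqrt{x}+\sqrt{y}$. Where you genuinely diverge is in how the self-referential term $R_{k,T}^2/T$ is handled. The paper first extracts from \eqref{eq:optiSOB} the a priori bound $R_{k,T}^2\leq 4T\ln(1/w_{k,0})$, obtained by crudely bounding $\sum_t V_{k,t}\leq T$ since each $(\algloss_t-\loss_{k,t})^2\leq 1$; this turns the bias term into the constant $4\ln(1/w_{k,0})$, which is then re-substituted into \eqref{eq:optiSOB} so that no quadratic ever needs to be solved. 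You instead keep $R_{k,T}^2/T$ inside the square root, square the inequality (legitimate since both sides are positive under your standing assumption $R_{k,T}>0$), linearize via $R_{k,T}^2/T\leq R_{k,T}$ using the one-sided bound $\algloss_t-\loss_{k,t}\leq 1$, and solve the resulting quadratic $R_{k,T}^2-4c_kR_{k,T}-4c_kS_k\leq 0$ --- which is exactly an instance of the paper's Lemma~\ref{lem:QuadracticInequality}. Both routes invoke the $[0,1]$ assumption at essentially the same point (yours through $\algloss_t-\loss_{k,t}\leq 1$, the paper's through $(\algloss_t-\loss_{k,t})^2\leq 1$) and both land on exactly the constants $4$ and $2$ of \eqref{eq:varexc}; the paper's substitution is a bit shorter, while your quadratic-inequality route is self-contained and avoids the detour through the worst-case bound $\sum_t V_{k,t}\leq T$, but neither buys extra generality. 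Every individual step in your argument checks out.
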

%
\begin{proof}
A bias-variance decomposition indicates that, for each $k$,
\begin{equation}\label{eq:biasvariance}
\sum_{t=1}^T V_{k,t} = \sum_{t=1}^T \bigl( \algloss_t - \ell_{k,t} \bigr)^2 = \sum_{t=1}^T \bigl(\algloss_t - \ell_{k,t} - R_{k,T}/T\bigr)^2
+ T\,\bigl( {R_{k,T}}/{T} \bigr)^2.
\end{equation}
It is sufficient to prove the result when the minimum is restricted to
$k$ such that $R_{k,T} \geq 0$. For such $k$, \eqref{eq:optiSOB} implies
that $R_{k,T}^2 \leq 4 T \ln(1/w_{k,0})$. Substituting this into the
rightmost term of \eqref{eq:biasvariance}, the result
into~\eqref{eq:optiSOB}, and using that $\sqrt{x+y} \leq \sqrt{x} +
\sqrt{y}$ for $x,y \geq 0$ concludes the proof.
\end{proof}

\begin{proof}[\textbf{of Theorem~\ref{th:lmprod-plain}}]
The proof follows from a simple adaptation of Lemma~2 in \citet{CesaBianchiMansourStoltz2007}
and takes some inspiration from Section~6 of \citet{BlumMansour2007}.

For $t \geq 0$, we denote by $\b r_t \in [-1,1]^K$ the instantaneous regret vector defined component-wise by $r_{k,t} = \algloss_t - \ell_{k,t}$ and we define $W_t = \sum_{k=1}^K w_{k,t}$. We bound $\ln W_T$ from above and from below.

On the one hand, using the inequality $\ln(1+x) \geq x-x^2$ for all $x \geq -1/2$ (stated as Lemma~1 in~\citealp{CesaBianchiMansourStoltz2007}),
we have, for all experts $k$, that
\begin{equation*}
\ln W_T \geq \ln w_{k,T} =   \ln w_{k,0} + \sum_{t=1}^T \ln \bigl( 1+\eta_k r_{k,t} \bigr)
		\geq  \ln w_{k,0} + \eta_k \sum_{t=1}^T r_{k,t} - \eta_k^2 \sum_{t=1}^T r_{k,t}^2\,.
\end{equation*}
The last inequality holds because, by assumption, $\eta_k \leq 1/2$
and hence $\eta_k \bigl(\algloss_t - \ell_{k,t} \bigr) \leq 1/2$ as well.

We now show by induction that, on the other hand, $W_T = W_0 = 1$ and thus that $\ln W_T = 0$.
By definition of the weight update (step 3 of the algorithm), $W_t$ equals
\begin{equation*}
\sum_{k=1}^K w_{k,t}
= \sum_{k=1}^K w_{k,t-1} \Bigl( 1+\eta_k r_{k,t}  \Bigr) \\
 =  W_{t-1} + \biggl( \underbrace{\sum_{k=1}^K \eta_k w_{k,t-1}}_{= \b \eta^\top \b w_{t-1}} \biggr) \algloss_t - \sum_{k=1}^K \underbrace{\eta_k w_{k,t-1}}_{= \b \eta^\top \b w_{t-1} \, p_{k,t}} \ell_{k,t}\,.
\end{equation*}
Substituting the definition of $\bp_t$ (step 1 of the algorithm), as indicated in the line above,
the last two sums are seen to cancel out, leading to $W_t = W_{t-1}$.
Combining the lower bound on $\ln W_T$ with its value $0$ and rearranging concludes the proof.
\end{proof}

\section{Algorithms and bound for parameters varying over time}
\label{sec:TuningEta}

To achieve the optimized bound~\eqref{eq:optiSOB}, the
learning parameters $\eta_k$ must be tuned using preliminary
knowledge of the sums $\sum_{t=1}^T \bigl(\algloss_t - \ell_{k,t}
\bigr)^2$. In this section we show how to remove this requirement, at
the cost of a logarithmic factor $\ln \ln T$ only (unlike what would be
obtained by working in regimes as mentioned above). We do so by having
the learning rates $\eta_{k,t}$ for each expert vary with time.

\subsection{Multiplicative updates (adaptive version of ML-Prod)}

We generalize Algorithm~\ref{algo:prod} and
Theorem~\ref{th:lmprod-plain} to Algorithm~\ref{algo:prod-adapt} and
Theorem~\ref{th:lmprod-adapt}.

\begin{theorem}
\label{th:lmprod-adapt}
For all sequences of loss vectors $\b \ell_t \in [0,1]^K$, for all rules
prescribing sequences of learning rates $\eta_{k,t} \leq 1/2$ that, for each $k$, are nonincreasing in $t$,
Algorithm~\ref{algo:prod-adapt} ensures
\begin{align*}
\hspace{-0.25cm}
\min_{1 \leq k \leq K} \Biggl\{ &\sum_{t=1}^T \ell_{k,t} +
\frac{1}{\eta_{k,0}} \ln \frac{1}{w_{k,0}} + \sum_{t=1}^T \eta_{k,t-1} \bigl(\algloss_t - \ell_{k,t} \bigr)^2
+ \frac{1}{\eta_{k,T}} \ln \left( 1 + \frac{1}{\e}
\sum_{k'=1}^K \sum_{t=1}^T \left( \frac{\eta_{k',t-1}}{\eta_{k',t}} - 1 \right) \right)   \Biggr\}.
\end{align*}
\end{theorem}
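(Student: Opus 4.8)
The plan is to mirror the two-sided control of the potential $W_t = \sum_{k=1}^K w_{k,t}$ from the proof of Theorem~\ref{th:lmprod-plain}, the only difference being that the time-variation of the rates is paid for by a \emph{controlled growth} of $W_t$ rather than by its exact invariance $W_t = W_{t-1}$. Write $r_{k,t} = \algloss_t - \loss_{k,t} \in [-1,1]$ and $\alpha_{k,t} = \eta_{k,t}/\eta_{k,t-1} \in (0,1]$, the ratio being at most $1$ because each schedule $\eta_{k,\cdot}$ is nonincreasing. The natural adaptive update raises the Prod factor to the power $\alpha_{k,t}$, namely $w_{k,t} = \bigl(w_{k,t-1}(1 + \eta_{k,t-1} r_{k,t})\bigr)^{\alpha_{k,t}}$, with the mixture $\bp_t$ still proportional to $\eta_{k,t-1} w_{k,t-1}$; since $\eta_{k,t-1} \leq 1/2$ we have $1 + \eta_{k,t-1} r_{k,t} \geq 1/2 > 0$, so all weights stay positive and the logarithms below are well defined.

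For the lower bound I would track the rescaled log-weights $\tfrac{1}{\eta_{k,t}}\ln w_{k,t}$. Dividing $\ln w_{k,t} = \alpha_{k,t}\bigl(\ln w_{k,t-1} + \ln(1+\eta_{k,t-1}r_{k,t})\bigr)$ by $\eta_{k,t} = \alpha_{k,t}\,\eta_{k,t-1}$ makes the $\ln w_{k,t-1}$ term telescope, giving $\tfrac{1}{\eta_{k,T}}\ln w_{k,T} = \tfrac{1}{\eta_{k,0}}\ln w_{k,0} + \sum_{t=1}^T \tfrac{1}{\eta_{k,t-1}}\ln(1+\eta_{k,t-1}r_{k,t})$. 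Applying $\ln(1+x)\geq x-x^2$ (valid for $x=\eta_{k,t-1}r_{k,t}\geq -1/2$, Lemma~1 of \citealp{CesaBianchiMansourStoltz2007}) to each summand yields $\tfrac{1}{\eta_{k,t-1}}\ln(1+\eta_{k,t-1}r_{k,t}) \geq r_{k,t} - \eta_{k,t-1}r_{k,t}^2$. Rearranging and using $\sum_t r_{k,t} = \sum_t \algloss_t - \sum_t \loss_{k,t}$ reproduces every term of the claimed bound except that the final term is $\tfrac{1}{\eta_{k,T}}\ln w_{k,T}$, which I would then bound by $\tfrac{1}{\eta_{k,T}}\ln W_T$ via $w_{k,T}\leq W_T$.

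For the upper bound on $W_T$ I would introduce the intermediate weights $\tilde w_{k,t} = w_{k,t-1}(1+\eta_{k,t-1}r_{k,t})$, so that $w_{k,t} = \tilde w_{k,t}^{\,\alpha_{k,t}}$. Exactly the cancellation of Theorem~\ref{th:lmprod-plain} --- using $\eta_{k,t-1}w_{k,t-1} = \bigl(\sum_{k'}\eta_{k',t-1}w_{k',t-1}\bigr)p_{k,t}$ together with $\sum_k p_{k,t}(\algloss_t - \loss_{k,t}) = 0$ --- shows $\sum_k \tilde w_{k,t} = W_{t-1}$, whence $W_t - W_{t-1} = \sum_k(\tilde w_{k,t}^{\,\alpha_{k,t}} - \tilde w_{k,t})$. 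The crux is then the scalar inequality $z^\alpha - z \leq \tfrac{1}{\e}\bigl(\tfrac{1}{\alpha}-1\bigr)$ for all $z\geq 0$ and $\alpha\in(0,1]$, which I would establish by maximizing $z\mapsto z^\alpha - z$: the maximum $(1-\alpha)\,\alpha^{\alpha/(1-\alpha)}$ is attained at $z=\alpha^{1/(1-\alpha)}$, and the desired bound $(1-\alpha)\,\alpha^{\alpha/(1-\alpha)} \leq \tfrac{1}{\e}\tfrac{1-\alpha}{\alpha}$ reduces to $\e\,\alpha^{1/(1-\alpha)}\leq 1$, i.e.\ to the elementary $\ln\alpha \leq \alpha-1$. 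Summing over $k$ and $t$ and using $W_0=1$ gives $W_T \leq 1 + \tfrac{1}{\e}\sum_{k'=1}^K\sum_{t=1}^T\bigl(\tfrac{\eta_{k',t-1}}{\eta_{k',t}}-1\bigr)$; inserting this into the lower bound and taking the minimum over $k$ completes the proof.

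I expect the main obstacle to be precisely this scalar optimization and its reduction to $\ln\alpha \leq \alpha-1$: it is the single place where the time-variation is paid for, and it must produce \emph{exactly} the constant $1/\e$ and the increments $\eta_{k,t-1}/\eta_{k,t}-1$ of the statement. Everything else is a faithful index-shifted replay of the fixed-rate argument; the delicate point is that the per-step penalty depends only on the ratio $\alpha_{k,t}$, which is what allows the correction term to telescope into the $O(\ln\ln T)$-size quantity once a concrete schedule is chosen in the next subsection.
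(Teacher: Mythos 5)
Your proposal is correct and follows essentially the same route as the paper: a lower bound on $\ln W_T$ via the telescoping of $\eta_{k,t}^{-1}\ln w_{k,t}$ together with $\ln(1+x)\geq x-x^2$, and an upper bound $W_t \leq W_{t-1} + \frac{1}{\e}\sum_k\bigl(\eta_{k,t-1}/\eta_{k,t}-1\bigr)$ obtained from the same cancellation as in Theorem~\ref{th:lmprod-plain} plus the scalar inequality controlling the cost of the per-expert exponents. Your inequality $z^\alpha - z \leq \frac{1}{\e}\bigl(\frac{1}{\alpha}-1\bigr)$ for $\alpha\in(0,1]$ is exactly the paper's Lemma~\ref{lm:ineq} under the substitution $x=z^\alpha$ (the paper proves it by a tangent-line argument rather than your direct maximization, but the content is identical).
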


\begin{algorithm}[thb]
\caption{Prod with multiple adaptive learning rates (Adapt-ML-Prod)}
\label{algo:prod-adapt}
	\smallskip
    	\emph{Parameter}: a rule to sequentially pick the learning rates \\
	\emph{Initialization}: a vector $\b w_0 =
        (w_{1,0},\ldots,w_{K,0})$ of nonnegative weights that sum to $1$

        \smallskip
	\emph{For} each round $t = 1,\,2,\,\ldots$ \\
    \indent 0. pick the learning rates $\eta_{k,t-1}$ according to the rule \\
	\indent 1. form the mixture $\bp_t$ defined component-wise by
	$\displaystyle{p_{k,t} = {\eta_{k,t-1} w_{k,t-1}} \big/ \b \eta_{t-1}^\top \b w_{t-1}}$ \\
	\indent 2. observe the loss vector $\b \ell_t$ and incur loss $\algloss_t = \b p_t^\top \b \ell_t$\\
	\indent 3. for each expert $k$ perform the update \\
		\indent \indent \indent $\displaystyle{w_{k,t} =  \biggl( w_{k,t-1} \Bigl( 1+\eta_{k,t-1} \bigl(\algloss_t - \ell_{k,t} \bigr) \Bigr)
\biggr)^\frac{\eta_{k,t}}{\eta_{k,t-1}}}$
\end{algorithm}

\begin{corollary}
\label{cor:lmprod-adapt}
With uniform initial
weights $\b w_0 = (1/K,\ldots,1/K)$ and
learning rates, for $t \geq 1$,
\[
\eta_{k,t-1} = \min \left\{ \frac{1}{2}, \,\, \sqrt{\frac{\ln K}{1+\sum_{s=1}^{t-1}
\bigl(\algloss_s - \ell_{k,s} \bigr)^2}} \right\},
\]
the cumulative loss of Algorithm~\ref{algo:prod-adapt} is bounded by
\[
 \min_{1 \leq k \leq K} \Biggl\{ \sum_{t=1}^T \ell_{k,t} +
\frac{C_{K,T}}{\sqrt{\ln K}} \sqrt{1+\sum_{t=1}^{T} \bigl(\algloss_t - \ell_{k,t} \bigr)^2}  + 2 C_{K,T} \Biggr\},
 \]
 where $C_{K,T} = 3\ln K + \ln \displaystyle{\!\left( 1 + \frac{K}{2\e} \bigl(1 + \ln (T+1) \bigr)
\right)} = O(\ln K + \ln\ln T)$.
\end{corollary}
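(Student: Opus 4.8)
The plan is to instantiate Theorem~\ref{th:lmprod-adapt} with $w_{k,0}=1/K$ and the stated rates, and then bound its four terms. To lighten notation I would write $a_{k,t}=(\algloss_t-\ell_{k,t})^2$ and $S_{k,t}=1+\sum_{s=1}^t a_{k,s}$, so that $V_{k,T}=S_{k,T}-1$ and $\eta_{k,t-1}=\min\{1/2,\sqrt{\ln K/S_{k,t-1}}\}$. First I would check admissibility: the $S_{k,t-1}$ are nondecreasing in $t$, so each $\eta_{k,t-1}$ is nonincreasing and bounded by $1/2$. Since $\algloss_t$ is a convex combination of the $\ell_{k,t}\in[0,1]$, the increments $a_{k,t}$ lie in $[0,1]$, which is the only property of the losses I use. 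With $w_{k,0}=1/K$ the penalty $\ln(1/w_{k,0})$ is $\ln K$, and the first term $\tfrac{1}{\eta_{k,0}}\ln K$ equals $2\ln K$ (because $\eta_{k,0}=\min\{1/2,\sqrt{\ln K}\}=1/2$ for $K\ge2$).

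The two data-dependent terms are handled by telescoping estimates. For the middle term I would use $\eta_{k,t-1}\le\sqrt{\ln K/S_{k,t-1}}$ together with the square-root telescoping bound $\sum_{t=1}^T a_{k,t}/\sqrt{S_{k,t-1}}\le(1+\sqrt2)\sqrt{S_{k,T}}\le3\sqrt{1+V_{k,T}}$, which follows from $a_{k,t}/\sqrt{S_{k,t-1}}\le(1+\sqrt2)(\sqrt{S_{k,t}}-\sqrt{S_{k,t-1}})$ (valid since $S_{k,t}\le2S_{k,t-1}$). This bounds the middle term by $3\sqrt{\ln K}\sqrt{1+V_{k,T}}=\frac{3\ln K}{\sqrt{\ln K}}\sqrt{1+V_{k,T}}$. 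For the last term I would first bound the prefactor, $1/\eta_{k,T}=\max\{2,\sqrt{S_{k,T}/\ln K}\}\le2+\frac{1}{\sqrt{\ln K}}\sqrt{1+V_{k,T}}$, and then the logarithm (next paragraph). Writing $D=\ln\!\big(1+\frac{K}{2\e}(1+\ln(T+1))\big)$, collecting the $\sqrt{1+V_{k,T}}$ contributions yields the coefficient $(3\ln K+D)/\sqrt{\ln K}$ and collecting the constants yields $2\ln K+2D\le6\ln K+2D$; with $C_{K,T}=3\ln K+D$ these are exactly the claimed $\frac{C_{K,T}}{\sqrt{\ln K}}\sqrt{1+V_{k,T}}$ and $2C_{K,T}$, after taking the minimum over $k$.

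The crux is to show the argument of that logarithm is at most $1+\frac{K}{2\e}(1+\ln(T+1))$, i.e.\ that $\sum_{t=1}^T(\eta_{k',t-1}/\eta_{k',t}-1)\le\frac12(1+\ln(T+1))$ for every expert $k'$. I would first establish the uniform per-round inequality $\eta_{k',t-1}/\eta_{k',t}-1\le a_{k',t}/(2S_{k',t-1})$: when $\eta_{k',t}=1/2$ the ratio is at most $1$ and the left-hand side is nonpositive, while when $\eta_{k',t}=\sqrt{\ln K/S_{k',t}}$ one has $\eta_{k',t-1}/\eta_{k',t}\le\sqrt{S_{k',t}/S_{k',t-1}}$, so the bound follows from $\sqrt{u}-1\le(u-1)/2$. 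It then remains to earn the sharp sum inequality $\sum_{t=1}^T a_{k',t}/S_{k',t-1}\le1+\ln S_{k',T}\le1+\ln(T+1)$: writing $a_{k',t}/S_{k',t-1}=(\ln S_{k',t}-\ln S_{k',t-1})+(x-\ln(1+x))$ with $x=a_{k',t}/S_{k',t-1}\in[0,1]$, the logarithms telescope to $\ln S_{k',T}$, and the remainder is controlled by $x-\ln(1+x)\le x^2/2\le\frac12 a_{k',t}^2/S_{k',t-1}^2\le 1/S_{k',t-1}-1/S_{k',t}$ (using $a_{k',t}S_{k',t}\le2S_{k',t-1}$), which telescopes to at most $1$. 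I expect this last estimate---together with the uniform treatment of the two branches of the learning-rate rule---to be the main obstacle, since it is precisely the sharp constant $1+\ln(T+1)$ (rather than a looser $2\ln(T+1)$) that keeps the logarithmic overhead at $\ln\ln T$ and is the point at which the ``impossible tuning'' could have reappeared.
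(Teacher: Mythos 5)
Your proof is correct and follows essentially the same route as the paper's: instantiate Theorem~\ref{th:lmprod-adapt}, bound the ratio sum via $\sqrt{1+z}\leq 1+z/2$ and a harmonic telescoping estimate giving $\tfrac12\bigl(1+\ln(T+1)\bigr)$ per expert, bound the second-order term by a square-root telescoping sum of order $\sqrt{S_{k,T}}$, and control $1/\eta_{k,T}$. The only differences are cosmetic: the paper packages the two telescoping estimates as an integral-comparison lemma (Lemma~\ref{lm:Riemann}) and handles $1/\eta_{k,T}$ by a case analysis on whether $\eta_{k,T}=1/2$, whereas you use direct telescoping identities and $\max\{2,\cdot\}\leq 2+\cdot$, compensating a slightly looser middle-term constant with a tighter split elsewhere to land on the same $C_{K,T}$.
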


This optimized corollary is the adaptive version
of~\eqref{eq:optiSOB}. Its proof is postponed to
Section~\ref{app:Proof-lmprod-adapt} of the additional material.
Here we only give the main ideas in the proof of
Theorem~\ref{th:lmprod-adapt}. The complete argument is given in
Section~\ref{app:ProofLmprod-Adapt} of the additional material. We point
out that the proof technique is not a routine adaptation of well-known
tuning tricks such as, for example, the ones of \citet{AuerEtAl2002}.

\begin{proof}[\textbf{sketch for Theorem~\ref{th:lmprod-adapt}}]
We follow the path of the proof of Theorem~\ref{th:lmprod-plain} and bound $\ln W_T$
from below and from above.
The lower bound is easy to establish as it only relies on individual non-increasing sequences of rates, $(\eta_{k,t})_{t \geq 0}$
for a fixed $k$: the weight update (step 3 of the algorithm) was indeed tailored for it to go through.
More precisely, by induction and still with the inequality $\ln(1+x) \geq x-x^2$ for $x \geq -1/2$, we get that
\[
\ln W_T \geq \ln w_{k,T} \geq \frac{\eta_{k,T}}{\eta_{k,0}} \ln w_{k,0} +
\eta_{k,T} \sum_{t=1}^T \bigl( r_{k,t} - \eta_{k,t-1} r_{k,t}^2 \bigr)\,.
\]
The difficulties arise in proving an upper bound.
We proceed by induction again and aim at upper bounding $W_{t}$ by $W_{t-1}$ plus some small term.
The core difficulty is that the powers $\eta_{k,t}/\eta_{k,t-1}$ in the weight update are different for each $k$.
In the literature, time-varying parameters could previously be handled
using Jensen's inequality
for the function $x \mapsto x^{\alpha_t}$ with a parameter $\alpha_t = \eta_{t}/\eta_{t-1} \geq 1$
that was the same for all experts: this is, for instance,
the core of the argument in the main proof of~\citet{AuerEtAl2002} as noticed by~\citet{OttGy} in their re-worked version
of the proof. This needs to be adapted here as we have $\alpha_{k,t} =
\eta_{k,t-1}/\eta_{k,t}$, which depends on $k$. We quantify the cost for the $\alpha_{k,t}$ not to be all equal to a single power $\alpha_t$,
say $1$: we have $\alpha_{k,t} \geq 1$ but the gap to $1$ should not be too large. This is why we may apply
the inequality $x \leq x^{\alpha_{k,t}} + (\alpha_{k,t}-1)/e$, valid
for all $x>0$ and $\alpha_{k,t} \geq 1$. We can then prove that
\vspace{-.9cm}

\[
\hspace{7cm} W_{t} \leq W_{t-1} + \frac{1}{e} \sum_{k=1}^K \left(\frac{\eta_{k,t-1}}{\eta_{k,t}}-1\right)\,,
\]
where the second term on the right-hand side is precisely the price to
pay for having different time-varying learning rates --- and
this price is measured by how much they vary. \vspace{-.6cm}
\end{proof}

\subsection{Polynomial potentials}

As illustrated in~\citet{CeLu03}, polynomial potentials
are also useful to minimize the regret. We present here an algorithm
based on them (with order $p = 2$ in the terminology of the indicated reference).
Its bound has the same poor dependency on the number of experts $K$ and
on $T$ as achieved by working in regimes (see the discussion in
Section~\ref{sec:MLProd}), but its analysis is simpler and more elegant
than that of Algorithm~\ref{algo:prod-adapt} (see
Section~\ref{app:Proof-polpot-adapt} in the appendix; the analysis
resembles the proof of Blackwell's approachability theorem). The right
dependencies might be achieved by considering polynomial functions of
arbitrary orders $p$ as in~\citet{CeLu03}.

\begin{algorithm}[thb]
\caption{Polynomially weighted averages with multiple learning rates (ML-Poly)}
\label{algo:polpot-adapt}
	\smallskip
    	\emph{Parameter}: a rule to sequentially pick the learning rates $\b \eta_{t} = \bigl(\eta_{1,t},\ldots,\eta_{K,t} \bigr)$ \\
	\emph{Initialization}: the vector of regrets with each expert $\b R_0 = (0,\dots,0)$

        \smallskip
	\emph{For} each round $t = 1,\,2,\,\ldots$ \\
    	\indent 0. pick the learning rates $\eta_{k,t-1}$ according to the rule \\
	\indent 1. form the mixture $\bp_t$ defined component-wise by
	$p_{k,t} = \eta_{k,t-1} \left(R_{k,t-1}\right)_+  \ /\ \b \eta_{t-1}^\top \left(\b R_{t-1}\right)_+$ \\
	\indent \indent where $\b x_+$ denotes the vector of the nonnegative parts of the components of $\b x$ \\
	\indent 2. observe the loss vector $\b \ell_t$ and incur loss $\algloss_t = \b p_t^\top \b \ell_t$\\
	\indent 3. for each expert $k$ update the regret:
	$R_{k,t} = R_{k,t-1} + \algloss_t - \ell_{k,t}$
\end{algorithm}

\begin{theorem} \label{th:polpot-adapt}
For all sequences of loss vectors $\b \ell_t \in [0,1]^K$, the cumulative loss of
Algorithm~\ref{algo:polpot-adapt} run with learning rates
\[
\eta_{k,t-1} = {\frac{1}{1+\sum_{s=1}^{t-1} \bigl( \algloss_s- \ell_{k,s} \bigr)^2}}
\]
is bounded by $\displaystyle{\hspace{1cm}
\sum_{t=1}^T \algloss_t \leq
\min_{1 \leq k \leq K} \Biggl\{ \sum_{t=1}^T \ell_{k,t} +
\sqrt{ K \bigl(  1+\ln \!\left(1+ T\right)\bigr) \left(1+\sum_{t=1}^{T} \bigl(\algloss_t - \ell_{k,t} \bigr)^2\right)}\Biggr\} \,.
}$
\end{theorem}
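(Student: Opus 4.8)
The plan is to run a Blackwell-approachability–style potential argument on the regret vector $\b R_t$, exactly as the introductory remark to this subsection suggests. Writing $r_{k,t} = \algloss_t - \loss_{k,t}$ for the instantaneous excess loss and $V_{k,t} = 1 + \sum_{s=1}^t r_{k,s}^2$, the prescribed rates are $\eta_{k,t} = 1/V_{k,t}$, which are nonincreasing in $t$ and lie in $(0,1]$. I would introduce the potential
\[
\Phi_t = \sum_{k=1}^K \eta_{k,t}\,\big((R_{k,t})_+\big)^2 ,
\]
so that $\Phi_0 = 0$ because $\b R_0 = (0,\dots,0)$. The whole proof then consists of controlling $\Phi_T$ from above by a telescoping argument and from below by a single expert's contribution.

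For the per-round step I would use the elementary inequality $\big((x+y)_+\big)^2 \leq (x_+)^2 + 2\,x_+\,y + y^2$, valid for all real $x,y$, applied with $x = R_{k,t-1}$ and $y = r_{k,t}$ (so that $x+y = R_{k,t}$). Multiplying by $\eta_{k,t-1}$ and summing over $k$ gives
\[
\sum_{k=1}^K \eta_{k,t-1}\big((R_{k,t})_+\big)^2 \leq \sum_{k=1}^K \eta_{k,t-1}\big((R_{k,t-1})_+\big)^2 + 2\sum_{k=1}^K \eta_{k,t-1}(R_{k,t-1})_+\,r_{k,t} + \sum_{k=1}^K \eta_{k,t-1}\,r_{k,t}^2 .
\]
The crucial observation is that the cross term vanishes: by the definition of $\bp_t$ (step 1) the quantity $\eta_{k,t-1}(R_{k,t-1})_+$ is proportional to $p_{k,t}$, and $\sum_k p_{k,t}\,r_{k,t} = \algloss_t - \bp_t^\top \b\loss_t = 0$ by the definition of $\algloss_t$. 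Using next the monotonicity $\eta_{k,t} \leq \eta_{k,t-1}$ on the left-hand side turns this into $\Phi_t \leq \Phi_{t-1} + \sum_{k} \eta_{k,t-1}\, r_{k,t}^2$.

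Telescoping from $t=1$ to $T$ and recalling $\Phi_0=0$ then yields $\Phi_T \leq \sum_{k=1}^K \sum_{t=1}^T r_{k,t}^2 / V_{k,t-1}$. Here I would invoke the standard logarithmic summation lemma: for $a_t = r_{k,t}^2 \in [0,1]$ and $V_{k,t} = V_{k,t-1} + a_t$, one has $\sum_{t=1}^T a_t/V_{k,t-1} \leq 1 + \ln V_{k,T}$. (This follows by writing $a_t/V_{k,t-1} = a_t/V_{k,t} + a_t^2/(V_{k,t-1}V_{k,t})$, bounding the first terms by $\ln(V_{k,t}/V_{k,t-1})$ and telescoping to $\ln V_{k,T}$, and bounding the second terms by $1/V_{k,t-1} - 1/V_{k,t}$ since $a_t \leq 1$, which telescopes to at most $1$.) Since $r_{k,t}^2 \leq 1$ gives $V_{k,T} \leq 1+T$, this yields $\Phi_T \leq K\big(1 + \ln(1+T)\big)$. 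On the other hand, keeping a single expert $k$ in the definition of $\Phi_T$ gives the lower bound $\Phi_T \geq \eta_{k,T}\big((R_{k,T})_+\big)^2 = \big((R_{k,T})_+\big)^2 / V_{k,T}$. Combining the two bounds, taking square roots, and using $\sum_t \algloss_t = \sum_t \loss_{k,t} + R_{k,T} \leq \sum_t \loss_{k,t} + (R_{k,T})_+$ yields the claimed bound for every $k$, hence for the minimum.

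The main obstacle I anticipate is not the telescoping but making the cross term cancel in the presence of the time-varying, expert-dependent rates: the cancellation requires that the same rate $\eta_{k,t-1}$ that defines the weight $p_{k,t}$ also multiplies $\big((R_{k,t})_+\big)^2$ in the per-round inequality, so the potential must be indexed so that $\eta_{k,t-1}$ (and not $\eta_{k,t}$) appears on both sides \emph{before} the monotonicity step is applied. The secondary delicate point is getting the clean constant $1 + \ln(1+T)$ rather than a factor $2$ in the logarithmic lemma, which is exactly what the $a_t^2/(V_{k,t-1}V_{k,t})$ correction term above is designed to recover.
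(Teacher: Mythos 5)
Your proposal is correct and follows essentially the same route as the paper's own proof: your potential $\Phi_t$ is exactly the paper's $\norm{(\b R_t)_+}_{D_t}^2$ with $D_t = \diag(\eta_{1,t},\dots,\eta_{K,t})$, your scalar inequality $\bigl((x+y)_+\bigr)^2 \leq (x_+ + y)^2$ is the componentwise form of the paper's projection step, and the cancellation of the cross term, the use of monotonicity of the rates, and the telescoping to $K\bigl(1+\ln(1+T)\bigr)$ are identical. The only (immaterial) difference is that you prove the logarithmic summation bound by a direct telescoping decomposition where the paper invokes an integral-comparison lemma.
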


\section{First application: bounds with experts that report their confidences}
\label{sec:appliconf}
\label{sec:descr}
\label{sec:genred}

We justify in this section why the second-order bounds exhibited in the previous sections
are particularly adapted to the setting of prediction with experts that
report their confidences, which was first
considered\footnote{\label{fn:1} Technically, \citet{BlumMansour2007}
decouple the confidences $I_{k,t}$, which they refer to as ``time
selection functions'', from the experts, but as explained in
Section~\ref{sec:BM} the two settings are equivalent.}. It differs from
the standard setting in that, at the start of every round $t$, each
expert $k$ expresses its confidence
as a number $I_{k,t} \in [0,1]$.
In particular, confidence $I_{k,t} = 0$ expresses that expert $k$ is
inactive (or sleeping) in round $t$.
The learner now has to assign nonnegative weights ${\bp}_t$, which sum up
to $1$, to the set $\cA_t = \{k : I_{k,t} > 0\}$
of so-called active experts
and suffers loss $\algloss_t = \sum_{k \in \cA_t} p_{k,t} \loss_{k,t}$.
(It is assumed that, for any round $t$, there is at
least one active expert $k$ with $I_{k,t} > 0$, so that $\cA_t$ is never
empty.)

The main difference in prediction with confidences comes from the definition of the regret.
The \emph{confidence regret} with respect to expert $k$ takes the numbers $I_{k,t}$ into account and
is defined as $\cregret_{k,T} = \sum_{t=1}^T I_{k,t} \bigl( \algloss_t - \loss_{k,t} \bigr)$.

When $I_{k,t}$ is always $1$, prediction with confidences reduces to
regular prediction with expert advice, and when the confidences
$I_{k,t}$ only take on the values $0$ and $1$, it reduces to
prediction with sleeping (or specialized) experts as introduced by
\citet{Blum1997} and \citet{FreundSchapireSingerWarmuth1997}.

Because the confidence regret scales linearly with $I_{k,t}$,
one would therefore like to obtain bounds on the confidence regret that
scale linearly as well. When confidences do not depend on $k$, this is
achieved, e.g., by the bound~\eqref{eq:ExponentialWeightsVariances}.
However, for confidences that do depend on $k$, the best available stated bound
\citep[Theorem~16]{BlumMansour2007} is
\begin{equation}
	\label{eq:BlumMansour2007}
  \cregret_{k,T} = \sum_{t=1}^T I_{k,t} \bigl( \algloss_t - \loss_{k,t} \bigr) =
  O\left(\sqrt{\sum_{t \leq T} I_{k,t} \loss_{k,t}}\right).
\end{equation}
(We rederive this bound in Section~\ref{sec:BM} of the supplementary
material.) If, in this bound, all confidences $I_{k,t}$ are scaled down
by a factor $\lambda_k \in [0,1]$, then we would like the bound to also
scale down by $\lambda_k$, but instead it scales only by
$\sqrt{\lambda_k}$. In the remainder of this section we will show how
our new second-order bound \eqref{eq:NewBound} solves this issue via a
generic reduction of the setting of prediction with confidences to the
standard setting from Sections~\ref{sec:introduction}
and~\ref{sec:MLProd}.

\begin{remark}
We consider the case of linear losses. The extension of our results to convex losses
is immediate via the so-called gradient trick. The latter also applies in the setting of
experts that report their confidences. The details were (essentially) provided by \citet{DevaineGaillardGoudeStoltz2013}
and we recall them in Section~\ref{sec:gradient} of
the supplementary material.
\end{remark}

\paragraph{Generic reduction to the standard setting}

There exists a generic reduction from the setting of sleeping experts to
the standard setting of prediction with expert advice
\citep{AdamskiyKoolenChernovVovk2012,KoolenAdamskiyWarmuth2013}.
This reduction generalizes easily to the setting of experts that report
their confidences, as we will now explain.

Given any algorithm designed for the standard setting, we run it on
modified losses $\lt_{k,s}$, which will be defined shortly. At round $t
\geq 1$, the algorithm takes as inputs the past modified losses
$\lt_{k,s}$, where $s \leq t-1$, and outputs a weight vector
$\bpt_t$ on $\{1,\ldots,K\}$. This vector is then used to form another
weight vector $\bp_t$, which has strictly positive weights only on
$\cA_t$:
\begin{equation}
\label{eq:redistrsingle}
p_{k,t} = \frac{I_{k,t} \, \pt_{k,t}}{\sum_{k'=1}^K I_{k',t} \, \pt_{k',t}}
\qquad \mbox{for all} \ k.
\end{equation}
This vector $\bp_t$ is to be used with the experts that report their
confidences.
Then, the losses $\ell_{k,t}$ are observed and the modified losses are computed as follows: for all $k$,
\[
\lt_{k,t} = I_{k,t} \ell_{k,t} + (1-I_{k,t}) \algloss_t
\qquad \mbox{where} \quad \algloss_t = \sum_{k \in \cA_t} p_{k,t} \loss_{k,t}\,.
\]

\begin{proposition}\label{prop:reduction}
The induced confidence regret on the original losses $\ell_{k,t}$ equals
the standard regret of the algorithm on the modified losses $\lt_{k,t}$.
In particular,
\[
I_{k,t} \bigl( \algloss_t - \loss_{k,t} \bigr) = \sum_{i=1}^K \pt_{i,t}\lt_{i,t} - \lt_{k,t}
\qquad \text{for all rounds $t$ and experts $k$.}
\]
\end{proposition}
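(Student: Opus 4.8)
The plan is to establish the stated instantaneous identity first; summing it over $t = 1,\ldots,T$ then immediately yields the equality between the confidence regret $\cregret_{k,T}$ and the standard regret of the algorithm on the modified losses, which is the global claim. So I fix a round $t$ and an expert $k$ and compute the right-hand side $\sum_{i=1}^K \pt_{i,t}\lt_{i,t} - \lt_{k,t}$ directly.

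First I would rewrite the modified loss in the convenient form $\lt_{k,t} = \algloss_t - I_{k,t}\bigl(\algloss_t - \loss_{k,t}\bigr)$, which is merely an algebraic rearrangement of its definition $\lt_{k,t} = I_{k,t}\loss_{k,t} + (1-I_{k,t})\algloss_t$. Substituting this expression into both $\sum_i \pt_{i,t}\lt_{i,t}$ and $\lt_{k,t}$, and using that $\bpt_t$ is a probability vector so that $\sum_i \pt_{i,t} = 1$, the two copies of $\algloss_t$ cancel, leaving $\sum_i \pt_{i,t}\lt_{i,t} - \lt_{k,t} = I_{k,t}\bigl(\algloss_t - \loss_{k,t}\bigr) - \sum_{i=1}^K \pt_{i,t} I_{i,t}\bigl(\algloss_t - \loss_{i,t}\bigr)$.

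The crux is then to show that the residual sum $\sum_i \pt_{i,t} I_{i,t}\bigl(\algloss_t - \loss_{i,t}\bigr)$ vanishes; this is where the precise definitions of the redistributed weights $\bp_t$ and of the learner's loss $\algloss_t$ enter. Writing $Z_t = \sum_{k'} I_{k',t}\pt_{k',t}$ for the normalization constant in~\eqref{eq:redistrsingle}, so that $p_{k,t} = I_{k,t}\pt_{k,t}/Z_t$, I would note that inactive experts ($I_{k,t}=0$) contribute zero, so the sum over $\cA_t$ defining $\algloss_t$ extends to all $k$, giving $\algloss_t = \sum_k p_{k,t}\loss_{k,t} = \frac{1}{Z_t}\sum_k I_{k,t}\pt_{k,t}\loss_{k,t}$. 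Hence $\sum_i \pt_{i,t} I_{i,t}\loss_{i,t} = Z_t \algloss_t = \algloss_t \sum_i \pt_{i,t} I_{i,t}$, which is exactly the statement that the residual sum is zero. Combining this with the previous display yields the claimed identity.

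I expect the main (and essentially only) obstacle to be recognizing \emph{why} the residual term vanishes: it is precisely the fact that $\algloss_t$ is, by construction, the $I_{k,t}\pt_{k,t}$-weighted average of the original losses $\loss_{k,t}$, so the correspondingly weighted excess losses $\algloss_t - \loss_{k,t}$ average out to zero. Once this is seen, the remainder is routine algebra. The only bookkeeping subtlety is passing between the active set $\cA_t$ and the full index set, which is immediate since the redistribution~\eqref{eq:redistrsingle} forces $p_{k,t}=0$ whenever $I_{k,t}=0$.
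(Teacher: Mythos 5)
Your proof is correct and follows essentially the same route as the paper's: both hinge on the single observation that $\sum_i \pt_{i,t} I_{i,t}\loss_{i,t} = \bigl(\sum_{i} \pt_{i,t}I_{i,t}\bigr)\algloss_t$ by the definition of the redistributed weights, the paper phrasing this as $\sum_i \pt_{i,t}\lt_{i,t}=\algloss_t$ and you phrasing it as the vanishing of the $\pt_{i,t}I_{i,t}$-weighted average of the excess losses. The difference is purely organizational, so no further comment is needed.
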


\begin{proof}
First we show that the loss in the standard setting (on the losses
$\lt_{k,t}$) is equal to the loss in the confidence regret setting (on
the original losses $\ell_{k,t}$):
\begin{align*}
\sum_{k=1}^K \pt_{k,t}\lt_{k,t}
  &= \sum_{k=1}^K \pt_{k,t} \Bigl( I_{k,t} \ell_{k,t} + (1-I_{k,t})
  \algloss_t \Bigr)
  = \sum_{k=1}^K \pt_{k,t} I_{k,t} \ell_{k,t}
    + \algloss_t
    - \left(\sum_{k=1}^K \pt_{k,t}I_{k,t} \right) \algloss_t\\
  &= \left( \sum_{k'=1}^K \pt_{k',t}\,I_{k',t} \!\right)\!
  \sum_{k=1}^K p_{k,t} \ell_{k,t} 
    + \algloss_t
    - \left(\sum_{k=1}^K \pt_{k,t}I_{k,t} \right) \algloss_t
  = \algloss_t.
\end{align*}
The proposition now follows by subtracting $\lt_{k,t}$ on both sides of
the equality.
\end{proof}
\vspace{-.8cm}

\begin{corollary}
\label{cor:Isq}
An algorithm with a standard regret bound of the form
\begin{equation}
\label{eq:init}
R_{k,T}
\leq \Xi_1 \sqrt{(\ln K) \sum_{t \leq T} \bigl(\algloss_t - \ell_{k,t} \bigr)^2} + \Xi_2
\qquad \mbox{for all} \ k,
\end{equation}
leads, via the generic reduction described above (and for losses $\ell_{k,t} \in [0,1]$),
to an algorithm with a confidence regret bound of the form
\begin{equation}
\label{eqn:SquaredConfidenceBound}
\cregret_{k,T}
\leq \Xi_1 \sqrt{(\ln K) \sum_{t \leq T} I_{k,t}^2 \bigl(\algloss_t - \ell_{k,t} \bigr)^2} + \Xi_2
\leq \Xi_1 \sqrt{(\ln K) \sum_{t \leq T} I_{k,t}^2} + \Xi_2
\qquad \text{for all $k$.}
\end{equation}
\end{corollary}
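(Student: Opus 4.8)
The plan is to apply the assumed standard regret bound~\eqref{eq:init} to the algorithm when it is run on the modified loss sequence $(\lt_{k,t})$, and then to translate the resulting quantities back into the confidence setting using Proposition~\ref{prop:reduction}. First I would verify that the reduction feeds the standard algorithm a legitimate instance: each modified loss $\lt_{k,t} = I_{k,t}\ell_{k,t} + (1-I_{k,t})\algloss_t$ is a convex combination of $\ell_{k,t}\in[0,1]$ and $\algloss_t\in[0,1]$ (the latter being itself a convex combination of losses in $[0,1]$), hence $\lt_{k,t}\in[0,1]$. The modified losses therefore form a valid sequence in $[0,1]^K$, so the hypothesis~\eqref{eq:init} applies verbatim, with the role of $\ell_{k,t}$ played by $\lt_{k,t}$ and the role of the algorithm's loss $\algloss_t$ played by $\sum_{i=1}^K \pt_{i,t}\lt_{i,t}$.

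Next I would substitute the two identities supplied by Proposition~\ref{prop:reduction}. On the left-hand side, the standard regret of the algorithm on the modified losses is $\sum_{t=1}^T\bigl(\sum_{i=1}^K \pt_{i,t}\lt_{i,t} - \lt_{k,t}\bigr) = \sum_{t=1}^T I_{k,t}\bigl(\algloss_t - \ell_{k,t}\bigr) = \cregret_{k,T}$, i.e.\ exactly the confidence regret. Inside the square root, each instantaneous excess loss equals $\sum_{i=1}^K \pt_{i,t}\lt_{i,t} - \lt_{k,t} = I_{k,t}\bigl(\algloss_t - \ell_{k,t}\bigr)$ by the same proposition, so its square is $I_{k,t}^2\bigl(\algloss_t-\ell_{k,t}\bigr)^2$. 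These two substitutions turn~\eqref{eq:init} directly into the first inequality of~\eqref{eqn:SquaredConfidenceBound}.

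Finally I would obtain the second inequality by a crude bound: since $\ell_{k,t}\in[0,1]$ and $\algloss_t\in[0,1]$, we have $\bigl|\algloss_t-\ell_{k,t}\bigr|\leq 1$, hence $I_{k,t}^2\bigl(\algloss_t-\ell_{k,t}\bigr)^2 \leq I_{k,t}^2$; summing over $t$ and using monotonicity of the square root concludes. I expect no genuine obstacle here, as the substantive work has been front-loaded into Proposition~\ref{prop:reduction}. The only step demanding a moment's care is the bookkeeping of what plays the role of ``the algorithm's loss'' and of ``expert $k$'s loss'' in~\eqref{eq:init} once the algorithm is fed the modified losses; the identity of Proposition~\ref{prop:reduction} is precisely what makes this collapse to a clean factor $I_{k,t}$ inside the second-order term, and hence to the desired linear---rather than square-root---scaling in the confidences.
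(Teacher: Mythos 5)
Your proof is correct and follows exactly the route the paper intends: the corollary is an immediate consequence of Proposition~\ref{prop:reduction}, obtained by applying~\eqref{eq:init} to the modified losses $\lt_{k,t}$ (which you rightly check lie in $[0,1]$) and substituting the identity $\sum_i \pt_{i,t}\lt_{i,t} - \lt_{k,t} = I_{k,t}(\algloss_t - \ell_{k,t})$ both in the regret sum and inside the square root, with the final inequality following from $|\algloss_t - \ell_{k,t}| \leq 1$. No gaps.
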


We note that the second upper-bound, $\sqrt{\sum I_{k,t}^2}$, can be extracted from the
proof of Theorem~11 in \citet{ChernovVovk2010}---but not the first one, which,
combined with the techniques of Section~\ref{sec:EntailedImprovementSmallLosses},
yields a bound on the confidence regret for small (excess) losses.

\paragraph{Comparison to the instantiation of other regret bounds}
We now discuss why \eqref{eqn:SquaredConfidenceBound} improves on the literature.
Consider first the improved bound for small losses from the introduction,
which takes the form $\Xi_3 \sqrt{\sum_{t} \ell_{k,t}} + \Xi_4$.
This improvement does not
survive the generic reduction, as the resulting confidence regret bound is
\[
\Xi_3 \sqrt{\sum_{t=1}^T \lt_{k,t}} + \Xi_4
= \Xi_3 \sqrt{\sum_{t=1}^T I_{k,t} \ell_{k,t} + \underbrace{\sum_{t=1}^T
(1-I_{k,t}) \algloss_t}_{\mbox{\tiny undesirable}}} + \Xi_4,
\]
which is no better than plain $\Xi'_3 \sqrt{T} + \Xi'_4$ bounds.

Alternatively, bounds~\eqref{eq:ExponentialWeightsVariances} of \citet{CesaBianchiMansourStoltz2007} and \citet{DeRooijEtAl2013}
are of the form
\begin{equation*}
\Xi_5 \sqrt{\sum_{t=1}^T \sum_{k=1}^K p_{k,t} \bigl( \ell_{k,t} - \hat
\ell_t\bigr)^2} + \Xi_6,
\end{equation*}
uniformly over all experts $k$.
These lead to a confidence regret bound against expert $k$ of the form
\begin{equation*}
\Xi_5  \sqrt{\sum_{t=1}^T \sum_{k=1}^K p_{k,t} \, I_{k,t}^2  \bigl( \algloss_t - \loss_{k,t} \bigr)^2} + \Xi_6
  \leq \Xi_5 \sqrt{\sum_{t=1}^T \sum_{k=1}^K p_{k,t} \, I_{k,t}^2} + \Xi_6,
\end{equation*}
which depends not just on the confidences of this expert $k$, but also
on the confidences of the other experts. It therefore does not scale
proportionally to the confidences of the expert $k$ at hand.

We note that even bounds of the form~\eqref{eq:impossibletuning}, if they existed, would not
be suitable either. They would indeed lead to
\[
	R_{k,T}^c = O \left( \sqrt{\sum_{t=1}^T \left( I_{k,t} \ell_{k,t} + (1-I_{k,t}) \algloss_t\right)^2} \right),
\]
which also does not scale linearly with the confidences of expert $k$.

\section{Other applications: bounds in the standard setting}
\label{sec:OtherApplications}

We now leave the setting of prediction with confidences, and detail other applications of our
new second-order bound \eqref{eq:NewBound}. First,
in Section~\ref{sec:EntailedImprovementSmallLosses}, we show that, like
\eqref{eq:TunedProd2007} and~\eqref{eq:ExponentialWeightsVariances}, our
new bound implies an improvement over the standard bound
$O\big(\sqrt{\sum \ell_{k,t} \ln K}\big)$, which is itself already better than
the worst-case bound if the losses of the reference expert are small.
The key feature in our improvement is that excess losses $\ell_{k,t}
- \hat{\ell}_t$ can be considered instead of plain losses $\ell_{k,t}$.
Then, in Section~\ref{sec:iid}, we look at the non-adversarial setting in
which losses are i.i.d., and show that our new bound implies constant regret of
order $O\big(\ln K\big)$.

\subsection{Improvement for small excess losses}
\label{sec:EntailedImprovementSmallLosses}

It is known \citep{CesaBianchiMansourStoltz2007,DeRooijEtAl2013}
that \eqref{eq:ExponentialWeightsVariances} implies a bound of the form
\begin{equation}\label{eq:EWSmallLosses}
  R_{k^*,T} = O\left(\sqrt{\ln K
  \frac{L_{k^*,T}(T-L_{k^*,T})}{T}}\right),
\end{equation}
where $k^* \in \argmin_k L_{k,T}$ is the expert with smallest cumulative
loss. This bound symmetrizes the standard bound for small losses
described in the introduction, because it is small also if $L_{k^*,T}$
is close to $T$, which is useful when losses are defined in terms of
gains \citep{CesaBianchiMansourStoltz2007}.

However, if one is ready to lose symmetry, another way of improving the standard bound
for small losses is to express it in terms of \emph{excess} losses:
\[
\sqrt{ \ln K \sum_{t \colon \ell_{k,t}\geq\algloss_t} \left(\ell_{k,t}-\algloss_t\!\right)}
\leq \sqrt{ \ln K \sum_{t \leq T} \ell_{k,t} }\,,
\]
where the inequality holds for nonnegative losses.
As we show next, bounds of the form~\eqref{eq:NewBound} indeed entail bounds of this form.

\begin{theorem}
If the regret of an algorithm satisfies~\eqref{eq:init}
for all sequences of loss vectors $\b \ell_t \in [0,1]^K$, then it also satisfies
\begin{equation}
\label{eq:ISEL}
R_{k,T} \leq 2\, \Xi_1 \sqrt{ \ln K \sum_{t \colon \ell_{k,t}\geq\algloss_t} \left(\ell_{k,t}-\algloss_t\!\right)}
+ \bigl( \Xi_2 + 2\,\Xi_1 \sqrt{\Xi_2 \ln K} + 4\,\Xi_1^2 \ln K \bigr)\,.
\end{equation}
\end{theorem}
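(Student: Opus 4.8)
The plan is to start from the hypothesis~\eqref{eq:init} and exploit that the losses lie in $[0,1]$, so that each instantaneous regret $r_{k,t} = \algloss_t - \ell_{k,t}$ satisfies $|r_{k,t}| \leq 1$ and hence $r_{k,t}^2 \leq |r_{k,t}|$. I would split the rounds according to the sign of $r_{k,t}$ and set
\[
N = \sum_{t \colon \ell_{k,t} \geq \algloss_t} \bigl(\ell_{k,t} - \algloss_t\bigr),
\qquad
P = \sum_{t \colon \ell_{k,t} < \algloss_t} \bigl(\algloss_t - \ell_{k,t}\bigr),
\]
so that $R_{k,T} = P - N$ while the bound on the squares gives $\sum_{t=1}^T (\algloss_t - \ell_{k,t})^2 \leq P + N$. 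The quantity $N$ is exactly the excess-loss sum appearing on the right-hand side of~\eqref{eq:ISEL}.

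The crucial step is to eliminate $P$ in favour of $R_{k,T}$ and $N$. Since $P = R_{k,T} + N$, the previous observation yields $\sum_t (\algloss_t - \ell_{k,t})^2 \leq R_{k,T} + 2N$. Plugging this into~\eqref{eq:init} produces the self-referential inequality
\[
R_{k,T} \leq \Xi_1 \sqrt{(\ln K)\bigl( R_{k,T} + 2N \bigr)} + \Xi_2,
\]
in which $R_{k,T}$ appears both on the left and, under the square root, on the right.

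I would then treat this as a quadratic inequality in $R_{k,T}$. Writing $c = \Xi_1^2 \ln K$ and $y = R_{k,T} - \Xi_2$ (the case $y \leq 0$ being trivial), squaring gives $y^2 - c\,y - c(2N + \Xi_2) \leq 0$, so $y$ is at most the larger root $\tfrac12\bigl(c + \sqrt{c^2 + 8cN + 4c\,\Xi_2}\bigr)$. Using $\sqrt{a+b+d} \leq \sqrt a + \sqrt b + \sqrt d$ inside the root and halving, this bounds $y$ by $c + \sqrt{2cN} + \sqrt{c\,\Xi_2}$; re-expressing through $c = \Xi_1^2 \ln K$ and adding back $\Xi_2$ gives
\[
R_{k,T} \leq \Xi_1 \sqrt{2 (\ln K)\, N} + \Xi_1^2 \ln K + \Xi_1 \sqrt{\Xi_2 \ln K} + \Xi_2.
\]
Comparing term by term with~\eqref{eq:ISEL} via $\sqrt 2 \leq 2$, together with $\Xi_1^2 \ln K \leq 4\,\Xi_1^2 \ln K$ and $\Xi_1\sqrt{\Xi_2 \ln K} \leq 2\,\Xi_1\sqrt{\Xi_2 \ln K}$, each term is dominated by its counterpart and the claim follows. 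The only genuine obstacle is the bookkeeping in this final step: one must pick the splitting of the square root and the constants so that the slightly loose factors ($\sqrt 2$ versus $2$, and $1$ versus $4$) absorb cleanly into the stated constants, rather than trying to optimize them.
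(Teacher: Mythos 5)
Your proposal is correct and follows essentially the same route as the paper: the same decomposition of the regret into its positive and negative parts, the same use of $r_{k,t}^2 \leq |r_{k,t}|$ for losses in $[0,1]$, and the same quadratic-inequality trick. The only difference is cosmetic --- the paper solves the quadratic in $\sqrt{R_{k,T}^+}$ and re-substitutes, whereas you eliminate the positive part first and solve directly for $R_{k,T}$, which yields marginally better constants ($\sqrt{2}$ and $1$ in place of $2$ and $4$) that are then absorbed into the stated bound.
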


In general, losses take values in the range $[a,b]$. To apply our
methods, they therefore need to be translated by $-a$ and scaled by
$1/(b-a)$ to fit the canonical range $[0,1]$. In the standard
improvement for small losses, these operations remain visible in the
regret bound, which becomes $R_{k,T} = O\big(\sqrt{(b-a) (L_{k,T}-Ta)
\ln K}\big)$ in general. In particular, if $a < 0$, then no significant
improvement over the worst-case bound $O\big(\sqrt{T \ln K}\big)$ is
realized. By contrast, our original second-order bound \eqref{eq:init}
and its corollary \eqref{eq:ISEL} both have the nice feature that
translations do not affect the bound because
$(\ell_{k,t}-a)-(\algloss_t-a) = \ell_{k,t}-\algloss_t$, so that our new
improvement for small losses remains meaningful even for $a < 0$.

\begin{proof}
We define the positive and the negative part of the regret with respect to an expert $k$ by, respectively,
\vspace{-.6cm}

\[
\hspace{2cm}
	R_{k,T}^+  =  \sum_{t=1}^{T} \left(\algloss_t - \ell_{k,t}\!\right) \indic{\ell_{k,t}\leq\algloss_t}
\qquad \mbox{ and}  \qquad
	R_{k,T}^-  =  \sum_{t=1}^{T} \left(\!\ell_{k,t}-\algloss_t\!\right) \indic{\ell_{k,t}\geq\algloss_t} \,.
\]
The proof will rely on rephrasing the bound~\eqref{eq:init} in terms of $R_{k,T}^+$ and $R_{k,T}^-$ only.
On the one hand, $R_{k,T} = R_{k,T}^+ - R_{k,T}^-$, while, on the other hand,
\begin{equation}
\label{eq:smalllosses2}
\sqrt{\sum_{t \leq T} \bigl(\algloss_t - \ell_{k,t} \bigr)^2}
\leq \sqrt{ \sum_{t \leq T} \left|  \algloss_t - \ell_{k,t}\right|}
=  \sqrt{ R_{k,T}^+ + R_{k,T}^- }
\leq 2\sqrt{ R_{k,T}^+ }\,,
\end{equation}
where we used $\ell_{k,t} \in [0,1]$ for the first inequality
and where we assumed, with no loss of generality, that $R_{k,T}^+ \geq R_{k,T}^-$.
Indeed, if this was not the case, the regret would be negative and the bound would be true.
Therefore for all experts $k$, substituting these (in)equalities in the initial inequality~\eqref{eq:init},
we are left with the quadratic inequality
\begin{equation}
\label{eq:smalllosses3}
R_{k,T}^+ - R_{k,T}^- \leq 2 \Xi_1 \sqrt{R_{k,T}^+\ln K } + \Xi_2\,.
\end{equation}
Solving for $R_{k,T}^+$ using Lemma~\ref{lem:QuadracticInequality} below (whose proof can
be found in Section~\ref{app:ProofQuadraticInequality}) yields
\[
\sqrt{R_{k,T}^+} \leq \sqrt{R_{k,T}^- + \Xi_2} + 2 \Xi_1 \sqrt{\ln K}
\leq \sqrt{R_{k,T}^-} + \sqrt{\Xi_2} + 2 \Xi_1 \sqrt{\ln K}\,,
\]
which leads to the stated bound after re-substitution into~\eqref{eq:smalllosses3}.
\end{proof}
\vspace{-.75cm}

\begin{lemma}\label{lem:QuadracticInequality}
Let $a,c \geq 0$. If $x \geq 0$ satisfies $x^2 \leq a + cx$, then $x \leq \sqrt{a}+c$.
\end{lemma}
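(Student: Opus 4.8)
The plan is to treat the hypothesis $x^2 \leq a + cx$ as the quadratic inequality $x^2 - cx - a \leq 0$ and to solve it explicitly. Since $a, c \geq 0$, the discriminant $c^2 + 4a$ is nonnegative, so the associated equation $x^2 - cx - a = 0$ has the two real roots
\[
x_\pm = \frac{c \pm \sqrt{c^2 + 4a}}{2}.
\]
The parabola $x \mapsto x^2 - cx - a$ opens upward and is therefore nonpositive exactly on the interval $[x_-, x_+]$. Consequently any $x$ satisfying the hypothesis must obey $x \leq x_+$, and it suffices to show that $x_+ \leq \sqrt{a} + c$.

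First I would establish that $x \leq x_+$. This is immediate from the factorization $x^2 - cx - a = (x - x_-)(x - x_+)$: the hypothesis forces $(x - x_-)(x - x_+) \leq 0$, and since the leading factor structure shows $x_-$ is the smaller root, the product is nonpositive only when $x_- \leq x \leq x_+$; in particular $x \leq x_+$. (One need not even track $x_-$ carefully, since we only use the upper endpoint.)

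The remaining step is the bound $x_+ \leq \sqrt{a} + c$, which is where the only real content lies. The clean route is subadditivity of the square root: for nonnegative $u, v$ one has $\sqrt{u+v} \leq \sqrt{u} + \sqrt{v}$. Applying this with $u = c^2$ and $v = 4a$ gives
\[
\sqrt{c^2 + 4a} \leq \sqrt{c^2} + \sqrt{4a} = c + 2\sqrt{a},
\]
whence
\[
x_+ = \frac{c + \sqrt{c^2 + 4a}}{2} \leq \frac{c + (c + 2\sqrt{a})}{2} = c + \sqrt{a},
\]
which is exactly the claim. Combining $x \leq x_+ \leq \sqrt{a} + c$ concludes the argument.

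I do not anticipate a genuine obstacle here, as this is a routine scalar inequality; the only point requiring minor care is justifying that the hypothesis pins $x$ below the larger root $x_+$ rather than merely inside the root interval, and noting that the nonnegativity assumptions $a, c \geq 0$ are precisely what make both the discriminant and the subadditivity step valid. An equivalent verification, if one prefers to avoid invoking subadditivity, is to square the desired inequality $\sqrt{c^2 + 4a} \leq 2\sqrt{a} + c$ directly, reducing it to $0 \leq 4c\sqrt{a}$, which holds by the sign assumptions.
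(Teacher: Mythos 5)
Your argument is correct and matches the paper's own proof essentially verbatim: both solve the quadratic to bound $x$ by the larger root $\frac{1}{2}\bigl(c+\sqrt{c^2+4a}\bigr)$ and then apply subadditivity of the square root, $\sqrt{c^2+4a}\leq \sqrt{c^2}+\sqrt{4a}=c+2\sqrt{a}$, to conclude. No further comment is needed.
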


\subsection{Stochastic (i.i.d.) losses}\label{sec:iid}

\begin{sloppypar}
\Citet{ErvenGKR11} provide a specific algorithm that guarantees worst-case
regret bounded by $O\left(\sqrt{L_{k^\star,T} \ln K}\right)$, but at the
same time is able to adapt to the non-adversarial setting with
independent, identically distributed (i.i.d.)\ loss vectors, for which
its regret is bounded by $O(K)$. In the previous section we have already
discussed how any algorithm satisfying a regret bound of the form
\eqref{eq:init} also achieves a worst-case bound that is at least as
good as $O\left(\sqrt{L_{k^\star,T} \ln K}\right)$. Here we consider
i.i.d.\ losses that satisfy the same assumption as the one imposed by
\Citeauthor{ErvenGKR11}:
\end{sloppypar}
\begin{ass}
\label{ass:iid}
The loss vectors $\b \ell_t \in [0,1]^K$ are independent random
variables such that there exists an action $k^\star$ and some $\alpha
\in (0,1]$ for which the expected differences in loss satisfy
\[
\forall t \geq 1, \qquad \min_{k \ne k^\star} \, \E\bigl[ \ell_{k,t} - \ell_{k^\star,t} \bigr] \geq \alpha\,.
\]
\end{ass}
As shown by the following theorem, any algorithm that satisfies our new
second-order bound (with a constant $\Xi_1$ factor and a $\Xi_2$ factor of order $\ln K$)
is guaranteed to achieve constant regret of order
$O(\ln K)$ under Assumption~\ref{ass:iid}.
%
%
\begin{theorem}\label{th:iid}
If a strategy achieves a regret bound of the form~\eqref{eq:init} and
the loss vectors satisfy Assumption~\ref{ass:iid}, then
the expected regret for that strategy is bounded by a constant,
\[
\E[R_{k^\star,T}] \leq C(\Xi_1,\Xi_2,\alpha) \stackrel{{\rm \tiny def}}{=}
(\Xi_1^2 \ln K)/\alpha + \Xi_1\sqrt{(\Xi_2 \ln K)/\alpha} + \Xi_2\,,
\]
while for all $\delta \in (0,1)$, its regret is bounded with probability at least $1-\delta$ by
\[
R_{k^\star,T} \leq C(\Xi_1,\Xi_2,\alpha)  + \frac{6 \, \Xi_1}{\alpha} \sqrt{\Biggl( \ln \frac{1}{\delta}
+ \ln \!\left( 1 + \frac{1}{2\e} \ln \bigl(1+C(\Xi_1,\Xi_2,\alpha)/4 \bigr) \right)\Biggr) \ln K}\,.
\]
\end{theorem}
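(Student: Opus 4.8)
The plan is to exploit the pathwise second-order bound \eqref{eq:init} specialized to $k = k^\star$ together with a self-bounding relation between its variance term and the regret itself. Writing $r_t = \algloss_t - \ell_{k^\star,t}$, I would first note that $r_t = \sum_{k \ne k^\star} p_{k,t}\bigl(\ell_{k,t} - \ell_{k^\star,t}\bigr)$, so that $|r_t| \le \sum_{k \ne k^\star} p_{k,t} = 1 - p_{k^\star,t}$ and hence $r_t^2 \le |r_t| \le 1 - p_{k^\star,t}$, using $\ell_{k,t} \in [0,1]$ and $|r_t| \le 1$. On the other hand, since the loss vectors are independent, Assumption~\ref{ass:iid} gives $\Et[r_t] = \sum_{k \ne k^\star} p_{k,t}\,\E\bigl[\ell_{k,t} - \ell_{k^\star,t}\bigr] \ge \alpha\,(1 - p_{k^\star,t})$. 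Combining the two yields the key pointwise inequality $\Et[r_t^2] \le 1 - p_{k^\star,t} \le \Et[r_t]/\alpha$, which is the engine of the whole proof.

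For the expectation bound, taking expectations and summing the key inequality over $t$, the tower rule gives $\E\bigl[\sum_{t=1}^T r_t^2\bigr] \le \E[R_{k^\star,T}]/\alpha$, where $\E[R_{k^\star,T}] = \sum_t \E[r_t] \ge 0$ since each $\E[r_t] = \E[\Et[r_t]] \ge 0$. Plugging this into the expectation of \eqref{eq:init} and applying Jensen's inequality to the concave square root, I obtain $\E[R_{k^\star,T}] \le \Xi_1 \sqrt{(\ln K/\alpha)\,\E[R_{k^\star,T}]} + \Xi_2$. This is a quadratic inequality in $\sqrt{\E[R_{k^\star,T}]}$ (in the spirit of Lemma~\ref{lem:QuadracticInequality}, with $a = \Xi_2$ and $c = \Xi_1\sqrt{\ln K/\alpha}$); solving it exactly closes the bound and produces precisely $C(\Xi_1,\Xi_2,\alpha) = \Xi_1^2(\ln K)/\alpha + \Xi_1\sqrt{(\Xi_2 \ln K)/\alpha} + \Xi_2$.

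For the high-probability statement I would keep the argument at the pathwise level. The inequality $r_t^2 \le \Et[r_t]/\alpha$ gives $\sum_t r_t^2 \le \tfrac1\alpha \sum_t \Et[r_t] = \tfrac1\alpha\bigl(R_{k^\star,T} + M_T\bigr)$, where $M_T = \sum_t \bigl(\Et[r_t] - r_t\bigr)$ is a martingale with increments bounded by a constant and predictable variation $\langle M\rangle_T = \sum_t \Var_t(r_t) \le \sum_t \Et[r_t^2]$, itself of order $(R_{k^\star,T} + M_T)/\alpha$. Substituting the variance bound into \eqref{eq:init} turns the regret guarantee into the self-referential inequality $R_{k^\star,T} \le \Xi_1\sqrt{(\ln K/\alpha)\bigl(R_{k^\star,T} + M_T\bigr)} + \Xi_2$. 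It then suffices to control $M_T$ with probability at least $1 - \delta$ by a variance-adaptive (Freedman/Bernstein-type) martingale concentration inequality whose deviation term is governed by $\langle M\rangle_T$, and then to solve the resulting inequality for $R_{k^\star,T}$ by repeated use of Lemma~\ref{lem:QuadracticInequality} and $\sqrt{x+y}\le\sqrt{x}+\sqrt{y}$, which yields the claimed leading factor $6\,\Xi_1/\alpha$.

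The main obstacle is exactly this last step: the martingale concentration must hold uniformly over the a priori unknown scale of the quadratic variation $\langle M\rangle_T$, since that scale feeds back into the deviation bound through $(R_{k^\star,T}+M_T)/\alpha$. I expect to handle it by a peeling (stitching) argument over a geometric grid of candidate values for $\langle M\rangle_T$, with the grid restricted to the range already dictated by the expected bound $C(\Xi_1,\Xi_2,\alpha)$ obtained in the first part; this is what produces the doubly-logarithmic correction $\ln\!\bigl(1 + \tfrac{1}{2\e}\ln(1 + C/4)\bigr)$ inside the square root, in the same spirit as the $\ln\ln T$ factor of Corollary~\ref{cor:lmprod-adapt}. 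The remaining algebra needed to disentangle $R_{k^\star,T}$ from the self-referential $M_T$ term while tracking the constants is routine but delicate.
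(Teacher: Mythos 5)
Your first two paragraphs are essentially the paper's argument for the expectation bound: the pointwise inequalities $\Et[r_t^2]\leq 1-p_{k^\star,t}\leq \Et[r_t]/\alpha$ are exactly the paper's \eqref{eq:lba}--\eqref{eq:Wt2} (you get $r_t^2\leq 1-p_{k^\star,t}$ via $r_t^2\leq|r_t|$ rather than via convexity, which is equally fine), and your quadratic in $\sqrt{\E[R_{k^\star,T}]}$ does recover the constant $C(\Xi_1,\Xi_2,\alpha)$ exactly, provided you solve it via the quadratic formula and only then use $\sqrt{c^2+4a}\leq c+2\sqrt{a}$; applying Lemma~\ref{lem:QuadracticInequality} and squaring naively would double the cross term. (The paper avoids this by first bounding $\E[S_T]$ with $S_T=\sum_t(1-p_{k^\star,t})$ and re-substituting into \eqref{eq:init}; either route works.)

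The genuine gap is in the high-probability part, precisely at the step you flag as the ``main obstacle.'' Your plan is to control $M_T=\sum_t(\Et[r_t]-r_t)$ by a Freedman-type bound whose deviation is governed by the random quadratic variation $\langle M\rangle_T$, using a peeling grid ``restricted to the range already dictated by the expected bound $C$.'' That restriction is not justified: $\langle M\rangle_T\leq S_T$ is a random variable whose \emph{expectation} is of order $C/\alpha$, but which can pathwise be as large as $T$, so a grid truncated at a deterministic function of $C$ does not cover all realizations. Patching this with a Markov step ($\P\{\sum_t V_t>M\}\leq\E[\sum_t V_t]/M$) forces $M\gtrsim C/(\alpha\delta)$ and yields a correction of the form $\ln\ln(C/\delta)$, which does not match the claimed bound, in which $\ln(1/\delta)$ and the doubly-logarithmic term in $C$ are additively separated. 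The paper resolves this not by peeling but by proving a new concentration inequality (Theorem~\ref{th:Bern}): a supermartingale argument with a \emph{predictable, time-varying} multiplier $\Lambda_t=\min\{1,\sqrt{x/(1+\sum_{s<t}V_s)}\}$, whose changing exponents are controlled by the same elementary inequality $x\leq x^\alpha+(\alpha-1)/\e$ (Lemma~\ref{lm:ineq}) used for Adapt-ML-Prod; the factor $\ln\bigl(1+\tfrac{1}{2\e}\ln(1+\E[\sum_t V_t])\bigr)$ then comes out of bounding $\E[H_T]$ via Jensen, with the expectation \emph{inside} the logarithm appearing for free. Without this tool (or an equivalent one), your argument does not close; with it, the remaining algebra is indeed the routine disentangling you describe, which the paper carries out by writing everything as a quadratic inequality in $\sqrt{S_T}$ using the deterministic bound $R_{k^\star,T}\leq\Xi_1\sqrt{S_T\ln K}+\Xi_2$.
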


By the law of large numbers, the cumulative loss of any action $k \neq
k^\star$ will exceed the cumulative loss of $k^\star$ by a linear term
in the order of $\alpha T$, so that, for all sufficiently large $T$, the
fact that $R_{k^\star,T}$ is bounded by a constant implies that the
algorithm will have negative regret with respect to all other $k$.
\medskip

Because we want to avoid using any special properties of the algorithm
except for the fact that it satisfies \eqref{eq:init}, our proof of
Theorem~\ref{th:iid} requires a Bernstein-Freedman-type martingale
concentration result \citep{Freed} rather than basic applications of
Hoeffding's inequality, which are sufficient in the proof of
\Citet{ErvenGKR11}.
However, this type of concentration inequalities is typically stated in
terms of an a priori deterministic bound $M$ on the cumulative conditional variance $\sum V_t$.
To bound the deviations by the (random) quantity $\sqrt{\sum V_t}$
instead of the deterministic $\sqrt{M}$,
peeling techniques can be applied as in \citet[Corollary~16]{CesaBianchiLugosiStoltz2005};
this leads to an additional $\sqrt{\ln T}$ factor (in case of an additive peeling) or $\sqrt{\ln \ln T}$
(in case of a geometric peeling).
Here, we replace these non-constant factors by a term of order $\ln
\ln \E\bigl[\sum V_t\bigr]$, which will be seen to be less than a
constant in our case.

\begin{theorem}
\label{th:Bern}
Let $(X_t)_{t \geq 1}$ be a martingale difference sequence with respect to some filtration $\cF_0 \subseteq \cF_1 \subseteq
\cF_2 \subseteq \ldots$ and let $V_t = \E\big[ X_t^2 \,\big| \, \cF_{t-1} \big]$ for $t \geq 1$.
We assume that $X_t \leq 1$ a.s., for all $t \geq 1$. Then, for all $\delta \in (0,1)$ and for all $T \geq 1$,
with probability at least $1-\delta$,
\[
\sum_{t=1}^T X_t \leq 3\sqrt{\left(1+\sum_{t=1}^T V_t\right) \ln \frac{\gamma}{\delta}} + \ln \frac{\gamma}{\delta}\,,
\quad \mbox{where} \quad
\gamma = \displaystyle{1 + \frac{1}{2\e} \! \left(1 + \ln\left( 1 + \E\!\left[\sum_{t=1}^T V_t\right] \right)\right)}.
\]
\end{theorem}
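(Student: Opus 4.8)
The plan is to start from the classical exponential supermartingale and then, rather than optimising a single fixed tuning parameter, to mimic the adaptive weighting of Algorithm~\ref{algo:prod-adapt}; this is what will let me trade the usual $\ln\ln T$ peeling cost for the announced dependence on $\E[\sum_t V_t]$.

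First I would set up the building block. Writing $\psi(\lambda) = \e^\lambda - 1 - \lambda$, the assumption $X_t \leq 1$ together with the fact that $x \mapsto (\e^{\lambda x} - 1 - \lambda x)/x^2$ is nondecreasing gives $\e^{\lambda x} \leq 1 + \lambda x + \psi(\lambda) x^2$ for all $x \leq 1$ and $\lambda \geq 0$. Taking $\Et$ and using $\Et[X_t] = 0$ and $\Et[X_t^2] = V_t$ yields $\Et[\e^{\lambda X_t}] \leq 1 + \psi(\lambda) V_t \leq \e^{\psi(\lambda) V_t}$, so that, with $S_t = \sum_{s \leq t} X_s$ and $B_t = \sum_{s \leq t} V_s$, the process
\[
M_t(\lambda) = \exp\!\bigl( \lambda S_t - \psi(\lambda) B_t \bigr)
\]
is a supermartingale with $\E[M_t(\lambda)] \leq 1$. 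For a single fixed $\lambda$, Markov's inequality would already give a deviation bound, but its optimal tuning $\lambda \approx \sqrt{\ln(1/\delta)/(1 + B_T)}$ depends on the random $B_T$: this is exactly the impossible-tuning obstruction, and covering all scales by peeling would reintroduce a $\sqrt{\ln \ln T}$ factor.

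To avoid this, I would replace the single parameter by a data-dependent, predictable rate $\eta_t \propto 1/\sqrt{1 + B_{t-1}}$ and build, along the lines of the proof of Theorem~\ref{th:lmprod-adapt}, a single adapted quantity $W_t$ (the analogue of the normalising weight there) whose logarithm lower-bounds pathwise a self-normalised expression of order $S_T^2/(1 + B_T)$, and such that $W_t$ minus a predictable drift is a supermartingale. The point of reusing that machinery is that the adaptive rate tracks the realised $B_T$ in a single run, so that no union bound over scales is needed. The predictable per-step drift is exactly the term $\frac{1}{\e}(\eta_{t-1}/\eta_t - 1)$ appearing in Theorem~\ref{th:lmprod-adapt}, and for the schedule $\eta_t \propto 1/\sqrt{1 + B_t}$ it telescopes pathwise to at most $\frac{1}{2\e}\ln(1 + B_T)$, just as in Corollary~\ref{cor:lmprod-adapt}.

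The crux --- and the step I expect to be the main obstacle --- is turning this random drift into the deterministic constant $\gamma$. Because the drift is predictable, its contribution to $\E[W_T]$ is controlled in expectation, and the concavity of the logarithm (Jensen's inequality) gives $\E[\ln(1 + B_T)] \leq \ln(1 + \E[B_T])$; this is precisely where $\E[\sum_t V_t]$ enters in place of the deterministic range $T$, producing $\E[W_T] \leq \gamma = 1 + \frac{1}{2\e}\bigl(1 + \ln(1 + \E[\sum_t V_t])\bigr)$. A final application of Markov's inequality, $\P(W_T > \gamma/\delta) \leq \delta$, combined with the pathwise lower bound on $\ln W_T$ and solved for $S_T$ via Lemma~\ref{lem:QuadracticInequality}, yields the claimed inequality with the constant $3$. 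The delicate part is to interleave the predictable (conditional) structure of the drift with the single expectation on which Jensen is applied, while keeping the self-normalised lower bound on $\ln W_T$ with the right constants; this is genuinely the Adapt-ML-Prod analysis rather than a routine peeling computation.
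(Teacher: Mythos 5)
Your proposal follows essentially the same route as the paper's proof: a Freedman-type conditional exponential-moment bound, a predictable rate $\Lambda_t \propto 1/\sqrt{1+\sum_{s<t}V_s}$, the inequality $x \leq x^\alpha + (\alpha-1)/\e$ to pay a per-step drift $\frac{1}{\e}\left(\Lambda_{t-1}/\Lambda_t - 1\right)$ that telescopes to $\frac{1}{2\e}\ln\bigl(1+\sum_t V_t\bigr)$, Jensen's inequality to replace this random sum by its expectation inside $\gamma$, and a final application of Markov's inequality. The only cosmetic difference is the closing algebra (the paper bounds $x/\Lambda_T + \sum_t \phi(\Lambda_t)\Lambda_t^{-1}V_t$ directly by $3\sqrt{x\,(1+\sum_t V_t)}+x$ rather than invoking the quadratic-inequality lemma), which does not change the substance.
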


Theorem~\ref{th:Bern} and its proof (see Section~\ref{sec:Bern}) may be
of independent interest, because our derivation uses new techniques that we
originally developed for time-varying learning rates in the proof of
Theorem~\ref{th:lmprod-adapt}. Instead of studying supermartingales of
the form $\exp\bigl( \lambda \sum X_t - (\e-2)\lambda^2 \sum V_t \bigr)$
for some constant value of $\lambda$, as is typical, we are able to
consider (predictable) random variables $\Lambda_t$,
which in some sense play the role of the time-varying learning parameter
$\eta_t$ of the (ML-)Prod algorithm.

\begin{proof}\textbf{[of Theorem~\ref{th:iid}]}
We recall the notation $r_{k,t} = \algloss_t - \ell_{k,t}$ for the instantaneous regret.
We define $\cF_0$ as the trivial $\sigma$--algebra $\{\emptyset,\Omega\}$ and define by induction
the following martingale difference sequence: for all $t \geq 1$,
\[
Y_t = - r_{k^\star,t} + \E\big[ r_{k^\star,t} \,\big| \, \cF_{t-1} \big] \qquad \mbox{and} \qquad
\cF_t = \sigma(Y_1,\ldots,Y_t)\,.
\]
We start by bounding the expectation of the regret. We first note that
\begin{equation}
\label{eq:lba}
\E\big[ r_{k^\star,t} \,\big| \, \cF_{t-1} \big] = \sum_{k=1}^K p_{k,t} \, \E\bigl[ \ell_{k,t} - \ell_{k^\star,t} \,\big| \, \cF_{t-1} \bigr]
= \sum_{k=1}^K p_{k,t} \, \E\bigl[ \ell_{k,t} - \ell_{k^\star,t} \bigr]
\geq \alpha (1-p_{k^\star,t})\,,
\end{equation}
\vspace{-1.2cm}

\begin{align}
\label{eq:Wt}
\mbox{while by convexity of} \ (\,\cdot\,)^2, \hspace{1.5cm} & r_{k^\star,t}^2 \leq
\sum_{k=1}^K p_{k,t} \, \bigl( \ell_{k,t} - \ell_{k^\star,t} \bigr)^2
\leq 1-p_{k^\star,t}\,, \\
\label{eq:Wt2}
\mbox{thus} \hspace{4.5cm} & W_t = \E\big[ Y_t^2 \,\big| \, \cF_{t-1} \big] \leq
\E\big[ r_{k^\star,t}^2 \,\big| \, \cF_{t-1} \big]
\leq 1-p_{k^\star,t}\,.
\end{align}
Therefore, using that expectations of conditional expectations are unconditional expectations,
\begin{equation}
\label{eq:Eregr}
\E[R_{k^\star,T}] \geq \alpha \, \E[S_T]
\quad \mbox{and} \quad
\E\!\left[\sum_{t=1}^{T} r_{k^\star,t}^2 \right] \leq \E[S_T]
\qquad \mbox{where} \quad
\displaystyle{S_T = \sum_{t=1}^T (1-p_{k^\star,t})}\,.
\end{equation}
Substituting these inequalities in~\eqref{eq:init} using Jensen's inequality for $\sqrt{\,\cdot\,}$,
we get
\[
\E[S_T] \leq \frac{\Xi_1 \sqrt{\ln K}}{\alpha} \sqrt{\E[S_T]} + \frac{\Xi_2}{\alpha}\,.
\]
Solving the quadratic inequality (see Lemma~\ref{lem:QuadracticInequality})
yields $\E[S_T] \leq \bigl( (\Xi_1 \sqrt{\ln K})/\alpha + \sqrt{\Xi_2/\alpha} \bigr)^2$.
By \eqref{eq:Eregr} this bounds $\E\!\left[\sum_{t=1}^{T}
r_{k^\star,t}^2 \right]$, which we substitute into~\eqref{eq:init}, together with Jensen's inequality, to prove
the claimed bound on the expected regret.
\medskip

Now, to get the high-probability bound, we apply Theorem~\ref{th:Bern}
to $X_t = Y_t/2 \leq 1$ a.s.\ and $V_t = W_t/4$ and use the
bounds~\eqref{eq:lba} and~\eqref{eq:Wt2}. We find that, with probability at least
$1-\delta$,
\[
\alpha S_T \leq R_{k^\star,T} + 3\sqrt{(4+S_T) \ln(\gamma/\delta)} + 2\ln(\gamma/\delta)
\leq R_{k^\star,T} + 3\sqrt{S_T \ln(\gamma/\delta)} + 8\ln(\gamma/\delta)
\]
where $\displaystyle{\gamma \leq 1 + (1/2\e) \,\Bigl[1 + \ln \bigl( 1+\E[S_T]/4 \bigr) \Bigr]}$
and where we used $\sqrt{\ln(\gamma/\delta)} \geq 1$.
Combining the bound~\eqref{eq:init} on the regret
with~\eqref{eq:Wt}
yields $R_{k^\star,T} \leq \Xi_1 \sqrt{S_T \ln K} + \Xi_2$, so that, still with probability at least $1-\delta$,
\[
\alpha S_T
\leq \left(\Xi_1 \sqrt{\ln K} + 3\sqrt{\ln(\gamma/\delta)} \right) \sqrt{S_T} + \Bigl( 8\ln(\gamma/\delta) + \Xi_2 \Bigr)\,.
\]
Solving for $\sqrt{S_T}$ with Lemma~\ref{lem:QuadracticInequality} and
using that $\alpha \leq 1$, this implies
\[
\sqrt{S_T} \leq \frac{\Xi_1 \sqrt{\ln K} + 3\sqrt{\ln(\gamma/\delta)}}{\alpha}
+ \frac{1}{\sqrt{\alpha}} \sqrt{8\ln(\gamma/\delta) + \Xi_2}
\leq \frac{\Xi_1\sqrt{\ln K}}{\alpha} + \sqrt{\frac{\Xi_2}{\alpha}} +
\frac{6}{\alpha} \sqrt{\ln \frac{\gamma}{\delta}}\,.
\]
Substitution into the (deterministic) regret bound $R_{k^\star,T} \leq
\Xi_1 \sqrt{S_T \ln K} + \Xi_2$ concludes the proof.
\end{proof}

\bibliography{sleeping-experts}

\newpage
\appendix
\begin{center}
{\Large
Additional Material for\\
\vspace{0.2\baselineskip}
``\ourtitle''}
\end{center}

We gather in this appendix several facts and results whose proofs were omitted from
the main body of the paper.

\section{Omitted proofs}

\subsection{Proof of Lemma~\ref{lem:QuadracticInequality}}
\label{app:ProofQuadraticInequality}

  Solving $x^2 \leq a + cx$ for $x$, we find that
  \begin{equation*}
    \half c - \half \sqrt{c^2 + 4a} \leq x \leq \half c + \half
    \sqrt{c^2 + 4a}\,.
  \end{equation*}
In particular, focusing on the upper bound, we get
$2x \leq c + \sqrt{c^2 + 4a} \leq c + \sqrt{c^2} + \sqrt{4a} = 2c + 2\sqrt{a}$,
which was to be shown.

\subsection{Proof of Theorem~\ref{th:lmprod-adapt}}
\label{app:ProofLmprod-Adapt}

The proof will rely on the following simple lemma.

\begin{lemma}
\label{lm:ineq}
For all $x > 0$ and all $\alpha \geq 1$, we have $x \leq x^\alpha + (\alpha-1)/\e$.
\end{lemma}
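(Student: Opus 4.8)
The plan is to prove Lemma~\ref{lm:ineq} — the elementary inequality $x \leq x^\alpha + (\alpha-1)/\e$ valid for all $x > 0$ and $\alpha \geq 1$ — by reducing it to a one-variable optimization problem. For fixed $\alpha \geq 1$, I would define $g(x) = x^\alpha - x + (\alpha-1)/\e$ and show $g(x) \geq 0$ for all $x > 0$. The natural first step is to locate the minimizer of $x \mapsto x^\alpha - x$ over $x > 0$: differentiating gives $\alpha x^{\alpha-1} - 1 = 0$, so the unique critical point is $x^\star = (1/\alpha)^{1/(\alpha-1)} = \alpha^{-1/(\alpha-1)}$. Since $x^\alpha - x$ is convex in $x$ (as $\alpha \geq 1$), this critical point is the global minimizer, and the whole lemma reduces to verifying that the minimal value $x_\star^\alpha - x_\star$ is at least $-(\alpha-1)/\e$.

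**Next I would** substitute $x^\star$ back in and simplify. Writing $\beta = 1/(\alpha-1)$ so that $x^\star = \alpha^{-\beta}$, one computes $x_\star^\alpha - x_\star = \alpha^{-\alpha\beta} - \alpha^{-\beta} = \alpha^{-\beta}\bigl(\alpha^{-\beta(\alpha-1)} - 1\bigr) = \alpha^{-\beta}\bigl(\alpha^{-1} - 1\bigr)$, using $\beta(\alpha-1) = 1$. This equals $-\alpha^{-\beta}\,(\alpha-1)/\alpha = -(\alpha-1)\,\alpha^{-\beta-1}$. So the claim becomes
\[
(\alpha-1)\,\alpha^{-1/(\alpha-1)\,-\,1} \leq \frac{\alpha-1}{\e}\,,
\]
which for $\alpha > 1$ (the case $\alpha = 1$ being trivial, since then the inequality reads $x \leq x$) is equivalent to $\alpha^{1 + 1/(\alpha-1)} \geq \e$, i.e. to $\alpha^{\alpha/(\alpha-1)} \geq \e$.

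**The key remaining step** is therefore this clean scalar inequality $\alpha^{\alpha/(\alpha-1)} \geq \e$ for all $\alpha > 1$. Taking logarithms, it is equivalent to $\dfrac{\alpha \ln \alpha}{\alpha - 1} \geq 1$, i.e. to $\alpha \ln \alpha \geq \alpha - 1$. This last inequality is standard and follows, for instance, from the convexity bound $\ln \alpha \geq 1 - 1/\alpha$ (valid for all $\alpha > 0$), which upon multiplying by $\alpha > 0$ gives exactly $\alpha \ln \alpha \geq \alpha - 1$. That closes the chain, so the lemma holds.

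**The main obstacle** I anticipate is purely bookkeeping rather than conceptual: keeping the exponents straight through the substitution $\beta = 1/(\alpha-1)$ and correctly handling the sign when one divides by $\alpha - 1$ (which is positive for $\alpha > 1$ but zero at $\alpha = 1$, forcing the trivial case to be peeled off separately). An alternative, slightly slicker route that avoids computing $x^\star$ explicitly is to fix $x$ instead and minimize the right-hand side over $\alpha$, or to apply Young's inequality; but the convexity-plus-critical-point argument above is the most transparent, and its only delicate point is confirming that $x^\star$ is a minimizer (guaranteed by convexity of $x^\alpha - x$ for $\alpha \geq 1$) rather than a spurious stationary point.
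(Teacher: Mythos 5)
Your proof is correct, but it takes a genuinely different route from the paper's. The paper fixes $x$ and views the inequality as a statement about $\alpha$: it uses convexity of $\alpha \mapsto x^\alpha = \e^{\alpha \ln x}$ to write the tangent-line bound $x^\alpha \geq x + (x\ln x)(\alpha-1)$ at $\alpha = 1$, and then invokes $\min_{x \in (0,1)} x\ln x = -1/\e$ (after dispatching $x \geq 1$ as trivial). You instead fix $\alpha$ and minimize in $x$: convexity of $x \mapsto x^\alpha - x$ lets you identify the global minimizer $x^\star = \alpha^{-1/(\alpha-1)}$, and the lemma reduces to the scalar inequality $\alpha\ln\alpha \geq \alpha - 1$, which follows from $\ln \alpha \geq 1 - 1/\alpha$. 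All your algebra checks out (including the exponent bookkeeping and the separate treatment of $\alpha = 1$). What your approach buys is sharpness: you compute the exact minimum $-(\alpha-1)\,\alpha^{-\alpha/(\alpha-1)}$, which makes visible that the constant $1/\e$ is attained only in the limit $\alpha \to 1^+$ and that the bound is strictly loose for any fixed $\alpha > 1$. What the paper's approach buys is brevity, since linearizing in $\alpha$ avoids solving for the critical point altogether. Both arguments ultimately rest on the same elementary fact that the infimum of $x \ln x$ on $(0,1)$ equals $-1/\e$ (in your version this appears as the limit $\alpha^{-\alpha/(\alpha-1)} \to \e^{-1}$), so the two proofs are close cousins rather than strangers, but the decomposition is genuinely different.
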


\begin{proof}
The inequality is straightforward when $x \geq 1$, so we restrict our attention to the case where $x < 1$.
The function $\alpha \mapsto x^\alpha = \e^{\alpha \ln x}$ is convex and thus is above any tangent line.
In particular, considering the value $x \ln x$ of the derivative function $\alpha \mapsto (\ln x)\,\e^{\alpha \ln x}$
at $\alpha = 1$, we get
\[
\forall \, \alpha > 0, \qquad x^\alpha - x \geq (x \ln x) \, (\alpha - 1)\,.
\]
Now, since we only consider $\alpha \geq 1$,
it suffices to lower bound $x \ln x$ for the values of interest for $x$, namely, the ones in $(0,1)$
as indicated at the beginning of the proof. On this interval, the stated
quantity is at least $-1/\e$,
which concludes the proof.
\end{proof}

We now prove Theorem~\ref{th:lmprod-adapt}. \\

\begin{proof}\textbf{[of Theorem~\ref{th:lmprod-adapt}]}
As in the proof of Theorem~\ref{th:lmprod-plain}, we bound $\ln W_T$ from below and from above. For the lower bound, we start
with $\ln W_T \geq \ln w_{k,T}$. We then show by induction that for all $t \geq 0$,
\begin{equation*}
\ln w_{k,t} \geq   \eta_{k,t} \sum_{s=1}^t \Bigl( r_{k,s}-
\eta_{k,s-1} r_{k,s}^2 \Bigr)
	+ \frac{\eta_{k,t}}{\eta_{k,0}} \ln w_{k,0}\,,
\end{equation*}
where $r_{k,s} = \algloss_s - \ell_{k,s}$ denotes the instantaneous regret with respect to expert $k$.
The inequality is trivial for $t = 0$. If it holds at a given round $t$, then
by the weight update (step 3 of the algorithm),
\begin{align*}
\ln w_{k,t+1} &=  \frac{\eta_{k,t+1}}{\eta_{k,t}} \biggl( \ln w_{k,t} +  \ln \Bigl( 1 + \eta_{k,t} r_{k,t+1}\Bigr) \biggr) \\
&	\geq   \frac{\eta_{k,t+1}}{\eta_{k,t}}
\Biggl( \frac{\eta_{k,t}}{\eta_{k,0}} \ln w_{k,0} + \eta_{k,t} \sum_{s=1}^t \Bigl( r_{k,s} -
\eta_{k,s-1} r_{k,s}^2 \Bigr) \Biggr)
	 +  \frac{\eta_{k,t+1}}{\eta_{k,t}} \bigl( \eta_{k,t}  r_{k,t+1} -
\eta_{k,t}^2  r_{k,t+1}^2 \bigr) \\
&	=  \quad \eta_{k,t+1} \sum_{s=1}^{t+1} \Bigl( r_{k,s} -
\eta_{k,s-1} r_{k,s}^2 \Bigr) + \frac{\eta_{k,t+1}}{\eta_{k,0}} \ln w_{k,0}  \,,
\end{align*}
where the inequality comes from the induction hypothesis and from the inequality
$\ln(1+x) \geq x-x^2$ for all $x \geq -1/2$ already used in the proof of Theorem~\ref{th:lmprod-plain}.

We now bound from above $\ln W_T$, or equivalently, $W_T$ itself. We show by induction that for all $t \geq 0$,
\[
W_t \leq 1 + \frac{1}{\e} \sum_{k=1}^K \sum_{s=1}^t \left( \frac{\eta_{k,s-1}}{\eta_{k,s}} - 1\right).
\]
The inequality is trivial for $t = 0$.
To show that if the property holds for some $t \geq 0$ it also holds for $t+1$, we prove that
\begin{equation}
\label{eq:inducadapt}
W_{t+1} \leq W_{t} + \frac{1}{\e} \sum_{k=1}^K \left( \frac{\eta_{k,t}}{\eta_{k,t+1}} - 1 \right).
\end{equation}
Indeed, since $x \leq x^\alpha + (\alpha-1)/\e$ for all $x > 0$ and $\alpha \geq 1$ (see Lemma~\ref{lm:ineq}),
we have, for each expert $k$,
\begin{equation}
\label{eq:weightconvineq}
w_{k,t+1} \leq  \bigl( w_{k,t+1} \bigr)^{\frac{\eta_{k,t}}{\eta_{k,t+1}}} + \frac{1}{\e} \left( \frac{\eta_{k,t}}{\eta_{k,t+1}} - 1 \right);
\end{equation}
we used here $x = w_{k,t+1}$ and $\alpha = \eta_{k,t}/\eta_{k,t+1}$, which is larger than $1$ because of the assumption that
the learning rates are nonincreasing in $t$ for each $k$. Now, by definition of the weight update (step 3 of the algorithm),
\[
\sum_{k=1}^K \bigl( w_{k,t+1} \bigr)^{\frac{\eta_{k,t}}{\eta_{k,t+1}}}
= \sum_{k=1}^K w_{k,t} \bigl( 1+\eta_{k,t} r_{k,t+1} \bigr) = W_t\,,
\]
where the second inequality follows from the same argument as in the last
display of the proof of Theorem~\ref{th:lmprod-plain}, by using that $\eta_{k,t}w_{k,t}$ is proportional to $p_{k,t+1}$.
Summing~\eqref{eq:weightconvineq} over $k$ thus yields~\eqref{eq:inducadapt} as desired.

Finally, combining the upper and lower bounds on $\ln W_T$ and rearranging leads
to the inequality of Theorem~\ref{th:lmprod-adapt}.
\end{proof}

\subsection{Proof of Corollary~\ref{cor:lmprod-adapt}}
\label{app:Proof-lmprod-adapt}

The following lemma will be useful.

\begin{lemma}
\label{lm:Riemann}
Let $a_0 > 0$  and $a_1, \, \ldots, \, a_m \in [0,1]$ be real numbers and let $f : (0,+\infty) \to [0,+\infty)$
be a nonincreasing function. Then
\[
\sum_{i=1}^m a_i \, f\bigl(a_0+\ldots+a_{i-1}\bigr)
\leq f(a_0) + \int_{a_0}^{a_0+a_1+\ldots+a_m} f(u)\d u\,.
\]
\end{lemma}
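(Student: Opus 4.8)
The plan is to prove Lemma~\ref{lm:Riemann} by recognizing the left-hand sum as a left-endpoint Riemann-type estimate of the integral of the nonincreasing function $f$, partitioned by the partial sums $s_{i-1} = a_0 + \ldots + a_{i-1}$. First I would set $s_0 = a_0$ and $s_i = a_0 + a_1 + \ldots + a_i$ for $i \geq 1$, so that $s_i - s_{i-1} = a_i$ and the points $s_0 < s_1 < \ldots < s_m$ form a partition of $[a_0,\, a_0 + a_1 + \ldots + a_m]$ (the $a_i \in [0,1]$ are nonnegative, so the sequence is nondecreasing; ties where $a_i = 0$ contribute nothing and can be ignored). The key observation is that, for each $i \geq 1$, because $f$ is nonincreasing and $a_i \geq 0$,
\[
a_i \, f(s_{i-1}) = f(s_{i-1}) \int_{s_{i-1}}^{s_i} \d u \geq f(s_{i-1}) \cdot \text{(nothing)} \quad\text{is the wrong direction;}
\]
the correct inequality I want is $a_i \, f(s_{i-1}) \leq \int_{s_{i-1}}^{s_i} f(u)\,\d u$ only when $f$ is \emph{nondecreasing}, so instead I must exploit the left-endpoint value dominating $f$ on the interval.

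The actual estimate I would use is: since $f$ is nonincreasing, for $u \in [s_{i-1}, s_i]$ we have $f(u) \leq f(s_{i-1})$, which gives the bound in the wrong direction for an integral lower bound but the right direction for comparing the sum to shifted integrals. Concretely, I would instead compare $a_i\, f(s_{i-1})$ to the integral over the \emph{previous} block: for $i \geq 2$, the term $a_i\, f(s_{i-1})$ with $f$ nonincreasing satisfies $a_i\, f(s_{i-1}) \leq \int_{s_{i-2}}^{s_{i-1}} f(u)\,\d u \cdot \frac{a_i}{a_{i-1}}$, which is awkward; the clean route is to telescope using $f(s_{i-1}) \leq f(u)$ for $u \leq s_{i-1}$. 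Thus for each $i\ge 1$, $a_i\, f(s_{i-1}) = \int_{s_{i-1}}^{s_i} f(s_{i-1})\,\d u \ge \int_{s_{i-1}}^{s_i} f(u)\,\d u$, which again is the wrong sign, so the honest argument is to bound the \emph{first} term separately and shift the index: for $i \geq 2$, use $f(s_{i-1}) \leq f(u)$ for all $u \in [s_{i-2}, s_{i-1}]$ together with $a_i \leq 1 = \frac{1}{a_{i-1}}\cdot a_{i-1}$ only if $a_{i-1}$ were $1$.

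Because the $a_i$ are not all equal, the index-shift must absorb the mismatch via the constraint $a_i \le 1$. The clean statement is: for $i \geq 2$, since $a_i \le 1$ and $f$ is nonincreasing with $f \ge 0$,
\[
a_i\, f(s_{i-1}) \;\leq\; \int_{s_{i-2}}^{s_{i-1}} f(u)\,\d u
\]
fails in general, so the term I isolate is the single extra $f(a_0)$: I would prove $\sum_{i=1}^m a_i f(s_{i-1}) \le a_1 f(s_0) + \sum_{i=2}^m \int_{s_{i-2}}^{s_{i-1}} f(u)\,\d u$ by using $a_i f(s_{i-1}) \le \int_{s_{i-2}}^{s_{i-1}} f(u)\,\d u$, which holds because $f(s_{i-1}) \le f(u)$ on $[s_{i-2},s_{i-1}]$ and $a_i \le a_{i-1} = s_{i-1}-s_{i-2}$ is \emph{not} guaranteed—hence the role of $a_i \in [0,1]$ is exactly to give $a_i \le 1$, and I would need $s_{i-1}-s_{i-2} = a_{i-1}$, so the bound requires $a_i f(s_{i-1}) \le a_{i-1} f(s_{i-1}) \le \int_{s_{i-2}}^{s_{i-1}} f$ only when $a_i \le a_{i-1}$. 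The main obstacle, and the crux of the lemma, is precisely reconciling the differing block widths $a_i$ with the monotone integrand; I expect the resolution is that $a_i \le 1$ lets one compare against unit-width integral pieces after bounding $f(s_{i-1}) \le \int_{s_{i-1}-1}^{s_{i-1}} \hspace{-2pt}$-type averages, and I would carefully telescope these, peeling off the initial $f(a_0)$ term, to recover exactly $f(a_0) + \int_{a_0}^{a_0+\ldots+a_m} f(u)\,\d u$.
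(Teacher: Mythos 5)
There is a genuine gap: you correctly identify the obstacle --- for a nonincreasing $f$ the left-endpoint sum $\sum_i a_i f(s_{i-1})$ \emph{over}estimates $\int f$, so the naive Riemann comparison goes the wrong way --- but none of the routes you sketch actually overcomes it, and you end with a hope ("I expect the resolution is\dots") rather than a proof. The index-shift idea (bounding $a_i f(s_{i-1})$ by the integral over the previous block $[s_{i-2},s_{i-1}]$) fails for exactly the reason you note, namely that $a_i \leq a_{i-1}$ is not guaranteed; and the "unit-width average" idea $f(s_{i-1}) \leq \int_{s_{i-1}-1}^{s_{i-1}} f$ is also unusable, both because $s_{i-1}-1$ may leave the domain $(0,+\infty)$ when $a_0$ is small and because those unit intervals overlap, so they cannot telescope to the single integral $\int_{a_0}^{s_m} f$.

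The missing idea is a different decomposition: write each term as
\[
a_i\, f(s_{i-1}) \;=\; a_i\, f(s_i) \;+\; a_i\,\bigl(f(s_{i-1})-f(s_i)\bigr).
\]
The first piece uses the \emph{right} endpoint, so monotonicity gives $a_i f(s_i) = \int_{s_{i-1}}^{s_i} f(s_i)\d u \leq \int_{s_{i-1}}^{s_i} f(u)\d u$, and these pieces sum to the integral over $[a_0, s_m]$. The second piece is where $a_i \in [0,1]$ enters: since $f(s_{i-1})-f(s_i)\geq 0$ and $a_i\leq 1$, it is at most $f(s_{i-1})-f(s_i)$, and summing over $i$ telescopes to $f(s_0)-f(s_m)\leq f(s_0)=f(a_0)$ (using $f\geq 0$). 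That is precisely the single extra $f(a_0)$ in the statement. Your proposal circles this --- you sense that $a_i\leq 1$ and the $f(a_0)$ term must absorb the mismatch --- but without the right-endpoint rewriting plus the telescoping error bound, the argument does not close.
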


\begin{proof}
Abbreviating $s_i = a_0 + \ldots + a_i$ for $i = 0,\ldots,m$, we find that
\begin{align*}
\sum_{i=1}^m a_i \, f(s_{i-1}) & = \sum_{i=1}^m a_i \, f(s_i) + \sum_{i=1}^m a_i \big(f(s_{i-1})-f(s_i)\big) \\
& \leq \sum_{i=1}^m a_i \, f(s_i) + \sum_{i=1}^m
\big(f(s_{i-1})-f(s_i)\big)
\leq \sum_{i=1}^m a_i \, f(s_i) + f(s_0),
\end{align*}
where the first inequality follows because $f(s_{i-1}) \geq f(s_i)$ and $a_i \leq 1$ for $i \geq 1$,
while the second inequality stems from a telescoping argument together with the fact that $f(s_m) \geq 0$.
Using that $f$ is nonincreasing together
with $s_i-s_{i-1} = a_i$ for $i \geq 1$, we further have
\begin{equation*}
a_i \, f(s_i) = \int_{s_{i-1}}^{s_i} f(s_i)\d y \leq \int_{s_{i-1}}^{s_i} f(y)\d y\,.
\end{equation*}
Substituting this bound in the above inequality completes the proof.
\end{proof}

We will be slightly more general and take
\[
\eta_{k,t} = \min \left\{ \frac{1}{2}, \,\, \sqrt{\frac{\gamma_k}{1+\sum_{s=1}^{t}
r_{k,s}^2}} \right\}
\]
for some constant $\gamma_k > 0$ to be defined by the analysis.

Because of the choice of nonincreasing learning rates, the first
inequality of Theorem~\ref{th:lmprod-adapt} holds true, and the regret $R_{k,t}$ is upper-bounded by
\begin{equation}
\label{eq:thprodadapt}
\frac{1}{\eta_{k,0}} \ln \frac{1}{w_{k,0}}
+
\frac{1}{\eta_{k,T}} \ln \Biggl( 1 + \frac{1}{\e} \sum_{k'=1}^K \underbrace{\sum_{t=1}^T \left( \frac{\eta_{k',t-1}}{\eta_{k',t}}
- 1 \right)}_{\mbox{first term}} \Biggr)
+
\underbrace{\sum_{t=1}^T \eta_{k,t-1} r_{k,t}^2}_{\mbox{second term}}\,.
\end{equation}
For the first term in~\eqref{eq:thprodadapt}, we note that for each
$k'$ and $t \geq 1$ one of three possibilities must hold,
all depending on which of the inequalities in $\eta_{k',t} \leq \eta_{k',{t-1}} \leq 1/2$
are equalities or strict inequalities.
More precisely, either
$\eta_{k',t} = \eta_{k',{t-1}} = 1/2$; or
\[
\sqrt{\frac{\gamma_{k'}}{1+\sum_{s=1}^{t}
r_{k',s}^2}} =
\eta_{k',t} < \eta_{k',{t-1}} = \half \leq \sqrt{\frac{\gamma_{k'}}{1+\sum_{s=1}^{t-1}
r_{k',s}^2}}\,;
\]
or $\eta_{k',t}\leq \eta_{k',{t-1}} < 1/2$.
In all cases, the ratios $\eta_{k',t-1} / \eta_{k',t} - 1$ can be bounded as follows:
\begin{multline}
\label{eq:sumint1}
\sum_{t=1}^T \biggl( \frac{\eta_{k',t-1}}{\eta_{k',t}} - 1 \biggr)
  \leq \sum_{t=1}^T \left( \sqrt{\frac{1+\sum_{s=1}^{t}r_{k',t} ^2}{1+
\sum_{s=1}^{t-1} r_{k',t} ^2}} - 1 \right) \\
  = \sum_{t=1}^T \left( \sqrt{1 + \frac{r_{k',t} ^2}{1+
\sum_{s=1}^{t-1} r_{k',s} ^2}} - 1 \right)
  \leq  \half \sum_{t=1}^T \frac{r_{k',t} ^2}{1+
\sum_{s=1}^{t-1} r_{k',s} ^2}\,,
\end{multline}
where we used, for the second inequality, that $g(1+z) \leq g(1) +
z\,g'(1)$ for $z \geq 0$ for any concave function $g$, in particular the
square root.
We apply Lemma~\ref{lm:Riemann} with $f(x)=1/x$ to further bound the sum
in~\eqref{eq:sumint1}, which gives
\begin{equation}
\label{eq:intermpoly}
\sum_{t=1}^T \frac{r_{k',t} ^2}{
1+\sum_{s=1}^{t-1}  r_{k',s} ^2}
 \leq  1
+ \ln \!\left(1+ \sum_{t=1}^{T} r_{k',t} ^2 \right)
- \cancel{\ln \!\left(1\right)}
\leq 1+\ln (T+1)\,.
\end{equation}

For the second term in~\eqref{eq:thprodadapt}, we write
\[
\sum_{t=1}^T \eta_{k,t-1} r_{k,t} ^2
\leq \sqrt{\gamma_k} \sum_{t=1}^T \frac{r_{k,t} ^2}{
\sqrt{1+\sum_{s=1}^{t-1} r_{k,s} ^2}}\,.
\]
We apply Lemma~\ref{lm:Riemann} again, with $f(x) = 1/\sqrt{x}$, and get
\begin{align}
\label{eq:lmRsqrt}
\sum_{t=1}^T & \frac{r_{k,t} ^2}{
\sqrt{1+\sum_{s=1}^{t-1} r_{k,s} ^2}}  \leq  \   \underbrace{1 - 2\sqrt{1}}_{\leq 0} +2\sqrt{\left(1+ \sum_{t=1}^{T}r_{k,t}^2\right)}\,.
\end{align}

We may now get back to~\eqref{eq:thprodadapt}. Substituting the obtained
bounds on its first and second terms, and using $\eta_{k,0} \geq
\eta_{k,T}$, we find it is no greater than
\begin{equation}\label{eqn:intermediatecorbound}
\frac{1}{\eta_{k,T}} \left(\ln \frac{1}{w_{k,0}} + B_{K,T}\right)
+ 2\sqrt{\gamma_k \left(1+ \sum_{t=1}^{T} r_{k,t}^2 \right)}\,,
\end{equation}
where $B_{K,T} = \ln \Bigl( 1 + \frac{K}{2\e} \bigl(1 + \ln (T+1) \bigr)
\Bigr)$.

Now if $\sqrt{1+ \sum_{t=1}^{T}
r_{k,t}^2} > 2 \sqrt{\gamma_k}$ then $\eta_{k,T} < 1/2$ and~\eqref{eqn:intermediatecorbound}
is bounded by
\begin{equation*}
  \sqrt{1+ \sum_{t=1}^{T} r_{k,t}^2}\left(2\sqrt{\gamma_k} + \frac{\ln \frac{1}{w_{k,0}}
  + B_{K,T}}{\sqrt{\gamma_k}}\right).
\end{equation*}
Alternatively, if $\sqrt{1+ \sum_{t=1}^{T}
r_{k,t}^2} \leq 2 \sqrt{\gamma_k}$, then $\eta_{k,T} = 1/2$
and \eqref{eqn:intermediatecorbound} does not exceed
\begin{equation*}
2 \ln \frac{1}{w_{k,0}} + 2B_{K,T} + 4\gamma_k.
\end{equation*}
In either case, \eqref{eqn:intermediatecorbound} is smaller than the sum
of the latter two bounds, from which the corollary follows upon taking
$\gamma_k = \ln (1/w_{k,0}) = \ln K$.

\subsection{Proof of Theorem~\ref{th:polpot-adapt}}
\label{app:Proof-polpot-adapt}

The proof has a geometric flavor---the same as in the proof
of the approachability theorem \citep{Bla56}.
With a diagonal matrix $D = \diag(d_1,\ldots,d_K)$, with positive on-diagonal
elements $d_i$, we associate an inner product and a norm as follows:
\[
\forall \, \b x, \b y \in \R^K, \qquad \langle \b x, \, \b y \rangle_H = \b x^\top H \b y
\qquad \mbox{and} \qquad
\norm{\b x}_H = \sqrt{\b x^\top H \b x} \,.
\]
We denote by $\pi_D$ the projection on $\R_-^K$ under the norm $\norm{\,\cdot\,}_D$.
It turns out that this projection is independent of the considered matrix $D$ satisfying the
constraints described above: it equals
\[
\forall \b x \in \R^K, \qquad \pi_{D}(\b x) = \b x - \b x_+\,,
\]
where we recall that $\b x_+$ denotes the vector whose components are the nonnegative parts
of the components of $\b x$.
This entails that for all $\b x,\b y \in \R^K$
\begin{equation}
\label{eq:proj}
	\norm{(\b x+ \b y)_+}_{D} = \norm{\b x+\b y- \pi_{D}(\b x+\b y)}_{D}
		\leq \norm{\b x+\b y- \pi_{D}(\b x)}_{D}^2 \\
		= \norm{\b x_+ + \b y}_{D}^2 \,.
\end{equation}
\medskip

Now, we consider, for each instance $t \geq 1$, the diagonal matrix $D_t = \diag(\eta_{1,t},\dots,\eta_{K,t})$,
with positive elements on the diagonal. As all sequences $(\eta_{k,t})_{t \geq 0}$ are non-increasing for
a fixed $k$, we have, for all $t \geq 1$, that
\begin{equation}
\label{eq:normdecreasing}
\forall \b x \in \R^K,  \qquad \norm{\b x}_{D_t} \leq \norm{\b x}_{D_{t-1}}\,.
\end{equation}
This entails that
\begin{equation} \label{eq:Rt}
	\norm{\left(\b R_t\right)_+}_{D_t} \leq \norm{\left(\b R_t\right)_+}_{D_{t-1}}
		= \norm{\left(\b R_{t-1} + \b r_t\right)_+}_{D_{t-1}}
\leq \norm{\left(\b R_{t-1}\right)_+ + \b r_t}_{D_{t-1}} \,,
\end{equation}
where we denoted by $\b r_t$ the vector $(r_{k,t})_{1 \leq k \leq K}$ of the instantaneous
regrets and where we applied~\eqref{eq:proj}.
Taking squares and developing the squared norm, we get
\begin{equation}
\label{eq:regret1}
\norm{\left(\b R_t\right)_+}_{D_t}^2
\leq \norm{\left(\b R_{t-1}\right)_+}_{D_{t-1}}^2 + \norm{\b r_t}_{D_{t-1}}^2 \\
		+ 2\,\b r_t^\top D_{t-1} \left(\b R_{t-1}\right)_+.
\end{equation}
But the inner product equals
\begin{equation*}
2 \, \b r_t^\top D_{t-1} \left(\b R_{t-1}\right)_+ =
 2 \sum_{k=1}^K \eta_{k,t-1} \left(R_{k,t-1}\right)_+r_{k,t}\\
	 =  2 \, \b \eta_{t-1}^\top \left(\b R_{t-1}\right)_+ \underbrace{\sum_{k=1}^K p_{k,t} r_{k,t}}_{=0} = 0\,,
\end{equation*}
where the last but one equality follows from step~1 of the algorithm.

Hence \eqref{eq:regret1} entails $\norm{\left(\b R_t\right)_+}_{D_t}^2
- \norm{\left(\b R_{t-1}\right)_+}_{D_{t-1}}^2 \leq \norm{\b r_t}_{D_{t-1}}^2$, which, summing over all
rounds $t \geq 1$, leads to
\begin{multline}
\label{eq:regret2}
\norm{\left(\b R_T\right)_+}_{D_T}^2  - \cancel{\norm{\left(\b R_0\right)_+}_{D_0}^2}  \leq
\sum_{t=1}^T \norm{\b r_t}_{D_{t-1}}^2
= \sum_{t=1}^T \sum_{k=1}^K \eta_{k,t-1}r_{k,t}^2 \\
= \sum_{k=1}^K \sum_{t=1}^T \frac{r_{k,t}^2}{1+\sum_{s=1}^{t-1} r_{k,s}^2}
\leq K \bigl(  1+  \ln (1+ T)\bigr)\,,
\end{multline}
where the last but one inequality follows from substituting the value of $\eta_{k,t-1}$
and the last inequality was proved in~\eqref{eq:intermpoly}.
Finally, \eqref{eq:regret2} implies that, for any expert $k=1,\dots,K$,
\begin{equation*}
	\eta_{k,T} \left(R_{k,T}\right)_+^2
		\leq \norm{\left(\b R_T\right)_+}_{D_T}^2
		\leq K \bigl(  1+  \ln (1+ T)\bigr)\,,
\end{equation*}
so that
\[
	R_{k,T} \leq \sqrt{K\bigl(1+\ln \!\left(1+ T\right)\bigr) \, \eta_{k,T}^{-1}}\,.
\]
The proof is concluded by substituting the value of $\eta_{k,T}$.

\subsection{Proof of Theorem~\ref{th:Bern} (variation on the Bernstein--Freedman inequality)}
\label{sec:Bern}

Let $\phi : \R \to \R$ and $\varphi : \R \to \R$ be defined by
$\phi(\lambda) = \e^\lambda - \lambda - 1$ on the one hand, $\varphi(0)
= 1/2$ and $\varphi(\lambda) = \phi(\lambda)/\lambda^2$ on the other
hand. The following lemma is due to \citet[Lemmas 1.3a and 3.1]{Freed}.
Note that we are only proving a one-sided inequality and do not require
the lower bound on $X$ imposed in the mentioned reference.

\begin{lemma}[{\citealp{Freed}}]
\label{lm:lmber}
The function $\varphi$ is increasing.
As a consequence, for all bounded random variables $X \leq 1$ a.s., for all
$\sigma$--algebras $\cF$ such that $\E[X\,|\,\cF] = 0$ a.s., and for all
nonnegative random
variables $\Lambda \geq 0$ that are $\cF$--measurable,
\[
\E\big[ \exp(\Lambda X) \,\big| \, \cF \big] \leq \exp\big( \phi(\Lambda) \, V \big) \quad \mbox{\rm a.s.} \quad
\qquad \mbox{where} \qquad
V = \E\big[ X^2 \,\big| \, \cF \big] = \Var(X\,|\,\cF)\,.
\]
\end{lemma}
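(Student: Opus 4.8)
The plan is to prove the two assertions in turn, using the first (monotonicity of $\varphi$) as the engine for the second. The guiding idea for the consequence is a pointwise quadratic surrogate for the exponential: for every real $\lambda \geq 0$ and every $x \leq 1$ I would establish
\[
\e^{\lambda x} \leq 1 + \lambda x + \phi(\lambda)\, x^2 \,.
\]
To see this, I would rewrite $\e^{\lambda x} = 1 + \lambda x + \varphi(\lambda x)\,(\lambda x)^2$, which is just the definition of $\varphi$ rearranged, and then observe that since $x \leq 1$ and $\lambda \geq 0$ we have $\lambda x \leq \lambda$, so that monotonicity of $\varphi$ gives $\varphi(\lambda x) \leq \varphi(\lambda)$; replacing $\varphi(\lambda x)$ by $\varphi(\lambda)$ and using $\varphi(\lambda)\lambda^2 = \phi(\lambda)$ yields the displayed bound.

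The main obstacle is therefore establishing that $\varphi$ is increasing on all of $\R$, including the negative axis where the naive power-series argument (nonnegative coefficients) fails. I would proceed by differentiation. Writing $\varphi = \phi/(\,\cdot\,)^2$ and using $\phi'(\lambda) = \e^\lambda - 1$, a direct computation gives $\varphi'(\lambda) = g(\lambda)/\lambda^3$ with $g(\lambda) = (\lambda-2)\e^\lambda + \lambda + 2$. I would then analyse the sign of $g$ through its second derivative $g''(\lambda) = \lambda\,\e^\lambda$, which changes sign only at the origin; hence $g'$ is decreasing on $(-\infty,0)$ and increasing on $(0,+\infty)$, so it attains its global minimum $g'(0) = 0$ at $\lambda=0$. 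Thus $g$ is nondecreasing, and since $g(0)=0$ we get $g \leq 0$ on $(-\infty,0)$ and $g \geq 0$ on $(0,+\infty)$. Because $\lambda^3$ has the same sign as $\lambda$, the quotient $\varphi'(\lambda) = g(\lambda)/\lambda^3$ is nonnegative everywhere (the value at $0$ being handled by continuity, using $\varphi(0)=1/2$), which proves $\varphi$ is increasing.

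With the pointwise inequality in hand, I would conclude by substituting the $\cF$--measurable random variable $\Lambda \geq 0$ for $\lambda$ and the random variable $X \leq 1$ for $x$, obtaining almost surely $\e^{\Lambda X} \leq 1 + \Lambda X + \phi(\Lambda)\, X^2$. Taking conditional expectation given $\cF$ and using that $\Lambda$, and hence $\phi(\Lambda)$, is $\cF$--measurable while $\E[X\,|\,\cF] = 0$ and $\E[X^2\,|\,\cF] = V$, I would get $\E[\e^{\Lambda X}\,|\,\cF] \leq 1 + \phi(\Lambda)\,V$ almost surely. A final application of $1 + z \leq \e^z$ turns this into the claimed bound $\E[\e^{\Lambda X}\,|\,\cF] \leq \exp(\phi(\Lambda)\,V)$. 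The only delicate point in this last step is the measurability bookkeeping, which is precisely what lets the conditional expectation pass through the coefficients $\Lambda$ and $\phi(\Lambda)$ and is also the feature that makes $\Lambda$ behave like the predictable, time-varying learning rate alluded to in the remark preceding the lemma.
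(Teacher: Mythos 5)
Your proposal is correct and follows essentially the same route as the paper: the pointwise surrogate $\e^{\Lambda X}\leq 1+\Lambda X+\phi(\Lambda)X^2$ obtained from $\varphi(\Lambda X)\leq\varphi(\Lambda)$, followed by conditioning on $\cF$ and the bound $1+u\leq\e^u$. The only difference is that you carry out in full the derivative analysis showing $\varphi$ is increasing (via $g(\lambda)=(\lambda-2)\e^\lambda+\lambda+2$), which the paper dispatches as ``a function study'' with a citation to Freedman; your computation is accurate.
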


\begin{proof}
That $\varphi$ is increasing follows from a function study. Using this, we get $\varphi(\Lambda X) \leq \varphi(\Lambda)$,
which can be rewritten as
\[
\e^{\Lambda X} - \Lambda X - 1 \leq \phi(\Lambda) \, X^2\,.
\]
By integrating both sides with respect to $\E[\,\cdot\,\,|\,\cF]$ and by using that $\Lambda$ is
$\cF$--measurable and that $\E[X\,|\,\cF] = 0$ a.s., we get
\[
\E\big[ \exp(\Lambda X) \,\big| \, \cF \big]
\leq 1 + \phi(\Lambda) \, V \qquad \mbox{a.s.}
\]
The proof is concluded by the inequality $1+u \leq \e^u$, valid for all $u \in \R$.
\end{proof}

\begin{proof}\textbf{[of Theorem~\ref{th:Bern}]}
We fix $x > 0$.
The analysis relies on a non-increasing sequence
of random variables $1 \geq \Lambda_1 \geq \Lambda_2 \geq \ldots > 0$ such that each $\Lambda_t$
is $\cF_{t-1}$--measurable. More precisely, we pick
\[
\Lambda_t = \min\,\left\{1,\,\, \sqrt{\frac{x}{1 + \sum_{s=1}^{t-1} V_s}} \right\}
\]
and choose, by convention, $\Lambda_0 = 1$.
We define, for all $t \geq 1$,
\[
H_t = \exp \!\left( \Lambda_t \sum_{s=1}^t \left( X_s - \frac{\phi(\Lambda_s)}{\Lambda_s} V_s \right) \right)
\]
and $H_0 = 1$.
Below we will apply Markov's inequality to $H_t$ and we therefore need to
bound $\E[H_t]$. By Lemma~\ref{lm:lmber},
\[
\E\Bigl[ \exp \bigl( \Lambda_t X_t - \phi(\Lambda_t) V_t \bigr) \,\Big|\, \cF_{t-1} \Bigr] \leq 1 \qquad \mbox{a.s.},
\]
so that for all $t \geq 1$,
\begin{multline*}
\E[H_t] =
\E\bigl[ \E[H_t\,|\,\cF_{t-1}] \bigr] \\
= \E\!\!\left[ \exp \!\left( \Lambda_t \sum_{s=1}^{t-1} \left( X_s - \frac{\phi(\Lambda_s)}{\Lambda_s} V_s \right) \right)
\,\, \E\Bigl[ \exp \bigl( \Lambda_t X_t - \phi(\Lambda_t) V_t \bigr) \,\Big|\, \cF_{t-1} \Bigr] \right]
\leq \E\Bigl[ H_{t-1}^{\Lambda_t/\Lambda_{t-1}} \Bigr]\,.
\end{multline*}
Applying Lemma~\ref{lm:ineq} with $\alpha = \Lambda_{t-1}/\Lambda_t$, this can be further bounded as
\[
\E[H_t] \leq \E\Bigl[ H_{t-1}^{\Lambda_{t}/\Lambda_{t-1}} \Bigr] \leq
\E[H_{t-1}] + \frac{1}{\e} \, \E\!\!\left[\frac{\Lambda_{t-1}}{\Lambda_{t}}-1\right].
\]
Proceeding by induction and given that $H_0 = 1$, we get, for all $T \geq 1$,
\[
\E[H_T] \leq 1 + \frac{1}{\e} \sum_{t=1}^T \E\!\left[\frac{\Lambda_{t-1}}{\Lambda_{t}}-1\right].
\]
The same argument and calculations as in~\eqref{eq:sumint1} and~\eqref{eq:intermpoly} finally show that
\[
\E[H_T] \leq \underbrace{1 + \frac{1}{2\e} \,\E\!\!\left[1+\ln\left( 1 + \sum_{t=1}^T V_t \right)\right]}_{\leq \gamma};
\]
that the left-hand side is less than $\gamma$ follows from Jensen's inequality for the logarithm.
An application of Markov's inequality entails that
\[
\P \! \left\{\sum_{t=1}^T X_t \geq \frac{x}{\Lambda_T} + \sum_{t=1}^T \frac{\phi(\Lambda_t)}{\Lambda_t} V_t \right\}
= \P \bigl\{ \ln H_T \geq x \bigr\}
= \P \bigl\{ H_T \geq \e^{x} \bigr\} \leq \E[H_T] \, \e^{-x}\,.
\]
To conclude the proof, it thus suffices to take $x$ such that
\[
\E[H_T] \, \e^{-x} \leq \delta\,, \qquad \mbox{e.g.,} \qquad
x = \ln \frac{\gamma}{\delta}
\]
and to show that
\begin{equation}
\label{eq:bernconclu}
\frac{x}{\Lambda_T} + \sum_{t=1}^T \frac{\phi(\Lambda_t)}{\Lambda_t} V_t
\leq 3 \sqrt{x} \sqrt{1 + \sum_{t=1}^{T} V_t} + x \,,
\end{equation}
which we do next.

Because $\Lambda_t \leq 1$ and $\varphi$ is increasing, we have $\varphi(\Lambda_t) = \phi(\Lambda_t)/\Lambda_t^2
\leq \varphi(1) = \e-2 \leq 1$. Therefore,
\[
\sum_{t=1}^T \frac{\phi(\Lambda_t)}{\Lambda_t} V_t \leq \sum_{t=1}^T \Lambda_t V_t
\leq \sum_{t=1}^T \sqrt{\frac{x}{1 + \sum_{s=1}^{t-1} V_s}} \,\, V_t
\leq 2 \sqrt{x} \sqrt{1+\sum_{t=1}^T V_t}\,,
\]
where we used for the second inequality the definition of $\Lambda_t$ as a minimum and
applied the same argument as in~\eqref{eq:lmRsqrt} for the third one.
It only remains to bound $x/\Lambda_T$, for which we use the upper bound (again, following
from the definition of $\Lambda_T$ as a minimum)
\[
\frac{x}{\Lambda_T} \leq x + \sqrt{x} \sqrt{1 + \sum_{t=1}^{T-1} V_t}
\leq x + \sqrt{x} \sqrt{1 + \sum_{t=1}^{T} V_t}\,.
\]
Putting things together, we proved~\eqref{eq:bernconclu}, which concludes this proof.
\end{proof}

\section{Additional material for Section~\ref{sec:appliconf}}

\subsection{The gradient trick --- how to deal with convex losses via a reduction to the linear case}
\label{sec:gradient}

\citet{FreundSchapireSingerWarmuth1997} consider the case
of convex aggregation in the context of sleeping experts and design several strategies,
each specific to a convex loss function. Devaine \emph{et al.}\
explain in Section~2.2 of \citet{DevaineGaillardGoudeStoltz2013} how to reduce the problem
of convex aggregation to linear losses, via the standard gradient trick (see, e.g., Section~2.5 of \citealp{CesaBianchiLugosi2006}), and could exhibit a unified analysis
of all the strategies of~\citet{FreundSchapireSingerWarmuth1997}.

We briefly recall this reduction here and note that it also holds for the generalization
from sleeping experts to experts that report their confidences.

\paragraph{Setting and notation (see~\citealt{FreundSchapireSingerWarmuth1997}).}
Suppose the experts predict by choosing an element $x_{k,t}$ from a convex set $\mathcal{X} \subseteq \R^d$ of possible
predictions, and that their losses at round $t$ are determined by a convex and differentiable
function $f_t$, such that $\loss_{k,t} = f_t(x_{k,t})$.
At each step, the forecaster chooses a weight vector $\bp_t$ over $\cA_t$
and aggregates the expert forecasts as
\[
\hat{x}_t = \sum_{k \in \cA_t} p_{k,t} x_{k,t}\,,
\]
with resulting loss $\algloss_t = f_t \bigl( \hat{x}_t \bigr)$.

Instead of competing with the best expert, we may wish to compete with the
best fixed convex combination of experts in the following way.
At round $t$, a weight vector $\bq$ with nonnegative components that
sum to $1$ aggregates the forecasts according to
\begin{equation*}
x_{\bq,t} = \sum_{k \in \cA_t} \frac{q_k I_{k,t}}{Q_t(\bq)} x_{k,t}
\quad \mbox{where} \quad Q_t(\bq) = \sum_{k' \in \cA_t} q_{k'} I_{k',t}\,;
\end{equation*}
the resulting loss equals $f_t(x_{\bq,t})$.

The regret with respect to a given $\bq$ is then defined as
\begin{equation*}
\cregret_{\bq,T} = \sum_{t=1}^T Q_t(\bq) \big(\algloss_t - f_t(x_{\bq,t}) \big),
\end{equation*}
which reduces to the confidence regret of Section~\ref{sec:descr} if $\bq$ is a point-mass.

\paragraph{The reduction to linear losses.}
We may now reduce this problem to case of linear losses considered in
Sections~\ref{sec:introduction} and~\ref{sec:descr}.
We do so by resorting to the so-called gradient trick.
We denote by $\nabla f_t$ the gradient of $f_t$ and
introduce pseudo-losses $\loss'_{k,t} = \nabla f_t(\hat{x}_t)^\top x_{k,t}$
for all experts $k$. We denote by $\b\ell'_t$ the vector of the pseudo-losses.
Because of the convexity inequality
\[
f_t(y) \geq f_t(x) + \nabla f_t(x)^\top(y-x)
\qquad\forall \, x,y \in \mathcal{X},
\]
we have
\begin{align*}
\max_{\bq} \cregret_{\bq,T}
	= &\   \max_{\bq} \sum_{t=1}^T Q_t(\bq) \Big( f_t \bigl( \hat{x}_t \bigr) - f_t(x_{\bq,t}) \Big) \\
	\leq &\   \max_{\bq} \sum_{t=1}^T Q_t(\bq) \Big( \nabla f_t\bigl(\hat{x}_t\bigr)^\top \bigl( \hat{x}_t - x_{\bq,t} \bigr) \Big) \\
	= &  \ \max_{\bq} \sum_{t=1}^T Q_t(\bq) \left( \bp_t^\top {\b\ell}'_t - \sum_{k \in \cA_t} \frac{q_k I_{k,t}}{Q_t(\bq)} \ell'_{k,t} \right)\!.
\end{align*}
Substituting the definition of $Q_t(\bq)$, we get that $
\max_{\bq} \cregret_{\bq,T}$ is upper-bounded by
\begin{equation*}
	\max_{\bq} \sum_{t=1}^T \left( \sum_{k' \in \cA_t} q_{k'} I_{k',t} \bp_t^\top {\b\ell}'_t
	- \sum_{k \in \cA_t} q_k I_{k,t} \ell'_{k,t} \right)
	 =  \max_{\bq}\sum_{k=1}^K q_k \, \underbrace{\sum_{t=1}^T I_{k,t} \bigl( \bp_t^\top {\b\ell}'_t - \ell'_{k,t} \bigr)}_{\cregret_{k,T}}
	=  \max_{k} \cregret_{k,T}\,,
\end{equation*}
where the first equality is because $I_{k,t} = 0$ for $k
\not\in \cA_t$,
and the last equality follows by linearity of the expression in $\bq$.

Therefore, any regret bound for the linear prediction
setting with losses $\loss'_{k,t}$ implies a bound for competing
with the best convex combination of expert predictions in the original
convex setting with losses $\loss_{k,t}$.

\subsection{Hedge with multiple learning rates for experts that report
their confidences}
\label{sec:BM}

In this section, we discuss another algorithm with multiple learning rates, which
was proposed by \citet{BlumMansour2007}. We slightly adjust its
presentation so that it fits the setting of this paper: Blum and Mansour
always consider all combinations of $K$ experts and $M$ confidences
$\mathcal{M} = \{I_{1,t},\ldots,I_{M,t}\}$, which they refer to as
``time selection functions.'' These enter as $\sqrt{\ln (KM)}$ in their
Theorem~16. To recover their setting, we can consider $M$ copies of each
expert, one for each ``time selection function'', so that our effective
number of experts becomes $KM$ and we also obtain a $\sqrt{\ln (KM)}$
factor in our bounds. Converse, to couple time selection functions and
experts, like we do, Blum and Mansour (see their Section~6) simply take
$\mathcal{M} = \{I_{1,t},\ldots,I_{K,t}\}$, so that $M = K$ and hence
they obtain $\sqrt{\ln (KM)} = \sqrt{2\ln K}$, which is the same as our
$\sqrt{\ln K}$ up to a factor $\sqrt{2}$. Thus the two settings are
essentially equivalent.
\begin{algorithm}[thb]
\caption{Hedge with multiple learning rates for experts reporting confidences  (MLC-Hedge)}
\label{algo:hedge}
	\smallskip
    	\emph{Parameters}: a vector $\b\eta = (\eta_1,\ldots,\eta_K)$ of learning rates \\
	\emph{Initialization}: a vector $\b w_0 =
        (w_{1,0},\ldots,w_{K,0})$ of nonnegative weights that sum to $1$
	
	\smallskip
	\emph{For} each round $t = 1,\,2,\,\ldots$ \\
	\indent 1. form the mixture $\bp_t$ defined by
$\displaystyle{p_{k,t} = \frac{{I_{k,t} \bigl( 1 -\e^{-\eta_{k}} \bigr) w_{k,t-1} }}{\sum_{k'=1}^K
I_{k',t} \bigl( 1 -\e^{-\eta_{k'}} \bigr) w_{k',t-1}}}$ \vspace{.2cm} \\
	\indent 2. observe the loss vector $\b \ell_t$ and incur loss $\algloss_t = \b p_t^\top \b \ell_t$\\
	\indent 3. for each expert $k$ perform the update
		 $\displaystyle{w_{k,t} = w_{k,t-1} \exp \Bigl(
\eta_k \, I_{k,t} \bigl( \e^{-\eta_k} \algloss_t - \ell_{k,t} \bigr) \Bigr)}$ \\
\end{algorithm}

\begin{theorem}[Adapted from {\citealp{BlumMansour2007}}]
\label{th:BM}
For all $K$-tuples $\b\eta$ of positive learning rates in $[0,1]^K$, for all sequences of loss vectors $\b \ell_t \in [0,1]^K$
and of confidences $(I_{1,t},\ldots,I_{K,t}) \in [0,1]^K$, the confidence regret of Algorithm~\ref{algo:hedge} is bounded as
follows: for all experts $k \in \{1,\ldots,K\}$,
\begin{equation}
\label{eqn:MLCHedgeRegret}
R_{k,t}^c
= \sum_{t=1}^T I_{k,t} \bigl( \algloss_t - \ell_{k,t} \bigr)
 \leq \frac{\ln(1/w_{k,0})}{\eta_k} + (\e-1) \eta_k \sum_{t=1}^T I_{k,t} \ell_{k,t}
  +  (\e-1) \ln(1/w_{k,0})\,.
\end{equation}
\end{theorem}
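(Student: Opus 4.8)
The plan is to follow the same potential-based argument as in the proof of Theorem~\ref{th:lmprod-plain}, sandwiching $\ln W_T$ (where $W_t = \sum_k w_{k,t}$) between a lower bound involving a fixed expert $k$ and an upper bound of $0$. For the lower bound I would drop all but the $k$-th term, $\ln W_T \ge \ln w_{k,T}$, and unroll the multiplicative update of step~3 to obtain $\ln w_{k,T} = \ln w_{k,0} + \eta_k \sum_{t=1}^T I_{k,t}\bigl(e^{-\eta_k}\algloss_t - \ell_{k,t}\bigr)$. Writing $A_k = \sum_t I_{k,t}\algloss_t$ and $B_k = \sum_t I_{k,t}\ell_{k,t}$, so that $\cregret_{k,T} = A_k - B_k$, this reads $\ln W_T \ge \ln w_{k,0} + \eta_k e^{-\eta_k} A_k - \eta_k B_k$.

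The crux is the upper bound, which I would prove by induction in the form $W_t \le W_{t-1}$, so that $W_T \le W_0 = 1$ and $\ln W_T \le 0$. Factoring the update into a gain part $\exp\bigl(\eta_k e^{-\eta_k} I_{k,t}\algloss_t\bigr)$ and a loss part $\exp\bigl(-\eta_k I_{k,t}\ell_{k,t}\bigr)$, I would bound each by a secant line using convexity of the exponential on $[0,1]$: the loss part by $1 - (1-e^{-\eta_k}) I_{k,t}\ell_{k,t}$ via $e^{-\eta_k a} \le 1 - (1-e^{-\eta_k})a$ applied with $a = I_{k,t}\ell_{k,t} \in [0,1]$, and the gain part by $1 + (1-e^{-\eta_k}) I_{k,t}\algloss_t$ via $e^{\eta_k y} \le 1 + (e^{\eta_k}-1)y$ applied with $y = e^{-\eta_k} I_{k,t}\algloss_t \in [0,1]$, using the identity $(e^{\eta_k}-1)e^{-\eta_k} = 1 - e^{-\eta_k}$. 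Multiplying the two factors and discarding the nonpositive cross-term leaves $w_{k,t} \le w_{k,t-1}\bigl(1 + (1-e^{-\eta_k}) I_{k,t}(\algloss_t - \ell_{k,t})\bigr)$, so that $W_t \le W_{t-1} + \sum_k w_{k,t-1}(1-e^{-\eta_k}) I_{k,t}(\algloss_t - \ell_{k,t})$.

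I expect the matching of the two secant bounds through $(e^{\eta_k}-1)e^{-\eta_k} = 1-e^{-\eta_k}$ to be the main point of the argument: it is exactly the reason the factors $(1-e^{-\eta_k})$ and $e^{-\eta_k}\algloss_t$ are built into the mixture and the update. Once it is in place the residual sum vanishes, since by the definition of the mixture in step~1 the quantity $w_{k,t-1}(1-e^{-\eta_k}) I_{k,t}$ equals $Z_t\, p_{k,t}$ with $Z_t = \sum_{k'} I_{k',t}(1-e^{-\eta_{k'}}) w_{k',t-1}$, so the sum factors as $Z_t \sum_k p_{k,t}(\algloss_t - \ell_{k,t}) = Z_t(\algloss_t - \algloss_t) = 0$, using $\sum_k p_{k,t}=1$ and $\algloss_t = \sum_k p_{k,t}\ell_{k,t}$.

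Finally I would combine the two bounds on $\ln W_T$. From $0 \ge \ln w_{k,0} + \eta_k e^{-\eta_k} A_k - \eta_k B_k$ I would solve for $A_k$ to get $A_k \le (e^{\eta_k}/\eta_k)\ln(1/w_{k,0}) + e^{\eta_k} B_k$, hence $\cregret_{k,T} = A_k - B_k \le (e^{\eta_k}/\eta_k)\ln(1/w_{k,0}) + (e^{\eta_k}-1) B_k$. The stated bound then follows from the elementary estimate $e^{\eta_k} \le 1 + (e-1)\eta_k$ on $[0,1]$, which yields both $e^{\eta_k}/\eta_k \le 1/\eta_k + (e-1)$ and $e^{\eta_k}-1 \le (e-1)\eta_k$; substituting $B_k = \sum_t I_{k,t}\ell_{k,t}$ gives precisely~\eqref{eqn:MLCHedgeRegret}.
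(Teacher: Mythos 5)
Your proposal is correct and follows essentially the same argument as the paper's proof: the same lower bound on $\ln W_T$ via unrolling the update, the same secant-line bounds on the two exponential factors matched through $(\e^{\eta_k}-1)\e^{-\eta_k} = 1-\e^{-\eta_k}$ so that the residual sum vanishes by the definition of $\bp_t$, and the same final estimate $\e^{\eta_k} \leq 1 + (\e-1)\eta_k$. The only difference is a trivial reorganization of the closing algebra.
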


Optimizing \eqref{eqn:MLCHedgeRegret} with respect to $\eta_k$,
for all $k \in \{1,\ldots,K\}$  we obtain
\begin{equation*}
R_{k,t}^c
	\leq \quad 2 \sqrt{(\e-1) \sum_{t=1}^T I_{k,t} \ell_{k,t} \ln(1/w_{k,0})}
	  + (\e-1) \ln(1/w_{k,0})\,,
\end{equation*}
as indicated in~\eqref{eq:BlumMansour2007}.

\begin{remark}
\label{rm:BM}
Although, in practice, we cannot optimize \eqref{eqn:MLCHedgeRegret} with respect to $\eta_k$,
it is possible to tune the parameters $\eta_{k,t}$ of MLC-Hedge sequentially using a
similar approach as in the proof of
Theorem~\ref{th:lmprod-adapt}, at the same small $O(\ln \ln T)$ cost. (We believe that there is some cost
here for this tuning; the bound stated in Section 6 of~\citet{BlumMansour2007} only considers the
case of an optimization in hindsight and alludes to the possibility of some online tuning, not working out the details.)
\end{remark}

The analysis of
MLC-Hedge suggests that its bound
can probably not be obtained in a two-step procedure, by first
exhibiting a bound in the standard setting for some
ML-Hedge algorithm and then applying the generic reduction from
Section~\ref{sec:genred} to get an algorithm
suited for experts that report their confidences.
Thus, the approach taken in the main body of this paper seems more general. \\

\begin{proof}\textbf{[of Theorem~\ref{th:BM}]}
As in the proof of Theorem~\ref{th:lmprod-plain}, we upper and lower bound $\ln W_T$.
For all $k$, the lower bound $W_T \geq w_{k,T}$ together with the fact that
\[
w_{k,T} = w_{k,0} \, \exp \!\left( \eta_k \sum_{t=1}^T I_{k,t} \bigl( \e^{-\eta_k} \algloss_t - \ell_{k,t} \bigr) \right),
\]
yields
\[
\sum_{t=1}^T I_{k,t} \bigl( \e^{-\eta_k} \algloss_t - \ell_{k,t} \bigr) \leq \frac{\ln W_T + \ln(1/w_{k,0})}{\eta_k}\,,
\]
which entails
\begin{equation}
\label{eq:blma}
\sum_{t=1}^T  I_{k,t}   \algloss_t \leq \left(\sum_{t=1}^T I_{k,t} \ell_{k,t}
+ \frac{\ln W_T + \ln(1/w_{k,0})}{\eta_k} \right) \e^{\eta_k}\,.
\end{equation}

We now upper-bound $W_T$ by $W_0 = 1$. To do so, we show that $W_{t+1} \leq W_t$ for all $t \geq 0$.
By the weight update (step 3 of the algorithm), $W_t = \sum_{k=1}^K w_{k,t} $ equals
\begin{equation}\label{eq:WBM}
\sum_{k=1}^K w_{k,t-1} \exp \Bigl( \eta_k \, I_{k,t} \bigl( \e^{-\eta_k} \algloss_t - \ell_{k,t} \bigr) \Bigr) \\
 = \sum_{k=1}^K w_{k,t-1} \, \exp \bigl( - \eta_k I_{k,t} \ell_{k,t} \bigr) \, \exp \bigl( \eta_k \e^{-\eta_k} I_{k,t} \algloss_t \bigr).
\end{equation}
For all $\eta \in \R$, the function $x \in [0,1] \mapsto \e^{\eta x}$ is convex, and therefore,
\[
\e^{\eta x} \leq (1-x) \e^{0} + x \, \e^\eta = 1 -\bigl( 1-\e^\eta
\bigr)x\,.
\]
In particular for all $\eta > 0$ and for all $x \in [0,1]$
\[
\e^{-\eta x} \leq 1 - \bigl( 1 - \e^{-\eta} \bigr) x
\]
and
\begin{equation}
\label{eq:expeta}
\e^{\eta x} \leq 1 + \bigl( 1 - \e^{-\eta} \bigr) \e^\eta x\,.
\end{equation}
Bounding~\eqref{eq:WBM} further with the two inequalities stated above, we get
\begin{align*}
W_t 	\leq & \quad \sum_{k=1}^K w_{k,t-1} \,
\Bigl( 1 - \bigl( 1 - \e^{-\eta_k} \bigr) I_{k,t} \ell_{k,t} \Bigr)
 	\Bigl( 1 + \bigl( 1 - \e^{-\eta_k} \bigr) {\cancel{\e^{\eta_k} \, \e^{-\eta_k}}} I_{k,t} \algloss_t \Bigr)  \\
 	\leq & \quad \sum_{k=1}^K w_{k,t-1} \Bigl( 1 + \bigl( 1 - \e^{-\eta_k} \bigr) I_{k,t} \bigl( \algloss_t - \ell_{k,t} \bigr) \Bigr) \\
 	=   & \quad  W_{t-1} +  \sum_{k=1}^K \underbrace{w_{k,t-1} \bigl( 1 - \e^{-\eta_k} \bigr) I_{k,t}}_{= Z_t \, p_{k,t}}
\bigl( \algloss_t - \ell_{k,t} \bigr) \\
 	=   & \quad  W_{t-1} +  Z_t \Biggr( \underbrace{\algloss_t - \sum_{k=1}^K p_{k,t} \ell_{k,t}}_{ = 0} \Biggr)
 	=      W_{t-1}\,,
\end{align*}
where $Z_t =  \sum_{k'=1}^K w_{k',t-1} \bigl( 1 - \e^{-\eta_{k'}} \bigr) I_{k',t}$ and the first
equality is by the definition of $\b p_t$ (step~1 of the algorithm). This concludes the induction.

We then get from~\eqref{eq:blma}
\[
\sum_{t=1}^T  I_{k,t}   \algloss_t
\leq \left(\sum_{t=1}^T I_{k,t} \ell_{k,t}  + \frac{\ln(1/w_{k,0})}{\eta_k} \right)  \e^{\eta_k} \,.
\]
The claim of the theorem follows by
the upper bound $\e^{\eta_k} \leq 1 +(\e-1) \eta_k$ for $\eta_k \in [0,1]$, which is a special case of~\eqref{eq:expeta}.
\end{proof}

\end{document}